\def\eqref#1{equation~\ref{#1}}
\def\1{\bm{1}}
\DeclareMathAlphabet{\mathsfit}{\encodingdefault}{\sfdefault}{m}{sl}
\SetMathAlphabet{\mathsfit}{bold}{\encodingdefault}{\sfdefault}{bx}{n}
\title{The Language of Time: A Language Model Perspective on Time Series Foundation Models}
\author{
Yi Xie\textsuperscript{1,2}, 
Yun Xiong\textsuperscript{1,2}, 
Zejian Shi\textsuperscript{3}, 
Hao Niu\textsuperscript{1,2}, 
Zhengfu Liu\textsuperscript{4} \\
\textsuperscript{1}College of Computer Science and Artificial Intelligence, Fudan University, Shanghai, China \\
\textsuperscript{2}Shanghai Key Laboratory of Data Science, Shanghai, China \\
\textsuperscript{3}ZCTech, Hangzhou, China \\
\textsuperscript{4}School of Mathematics and Statistics, Beijing Institute of Technology, Beijing, China \\
\texttt{\textsuperscript{1,2}\{yixie18, yunx, hniu\}@fudan.edu.cn}, \\ \texttt{\textsuperscript{3}shizejian@zctech-ai.com, \textsuperscript{4}3120235975@bit.edu.cn}
}
\newtheorem{proposition}{Proposition}[section]
\newtheorem{assumption}{Assumption}[section]
\newtheorem{lemma}{Lemma}[section]
\newtheorem{remark}{Remark}[section]
\newtheorem{theorem}{Theorem}[section]
\newtheorem{corollary}{Corollary}[section]
\newtheorem{definition}{Definition}[section]
\begin{document}

\maketitle
\thispagestyle{clean}

\begin{abstract}
With the rise of large language models, the paradigm of training foundation models with massive parameter counts on vast datasets has been adopted in multiple domains to achieve remarkable success. Time series foundation models represent a significant extension of this paradigm, demonstrating exceptional expressive power, generalization, and cross-domain transferability. However, this gives rise to a fundamental paradox: time series data reflect distinct dynamical systems, making cross-domain transfer intuitively implausible, yet this is contradicted by the models' empirical success.
To resolve this paradox, this paper investigates, from both theoretical and experimental perspectives, the representation learning mechanisms and generalization capabilities of patch-based time series foundation models. We argue that such models are not merely applying a new architecture but are fundamentally generalizing the representation paradigm of language models by \textbf{extending deterministic vector-based representations to latent probabilistic distributional forms}. Our theoretical analysis supports this framework by demonstrating that continuous time-series patches can be faithfully quantized into a discrete vocabulary whose key statistical properties are highly consistent with those of natural language.
This generalization allows time series models to inherit the robust representation and transfer abilities of large language models, thereby explaining their superior performance in temporal tasks. Ultimately, our work provides a rigorous theoretical cornerstone for understanding, evaluating, and improving the safety and reliability of large-scale time series foundation models.
\end{abstract}

\section{Introduction}
Time series data chronicle the evolution of complex systems through sequences of numerical observations sampled at uniform intervals, offering a quantitative fingerprint of their dynamics \cite{ts_def1}. The inherent characteristics of these data—most notably temporal dependencies, underlying trends, and seasonal cycles—make them invaluable for a vast array of applications, including traffic flow forecasting, logistics optimization, climate change analysis, and human mobility modeling \cite{traffic,traffic2,remote,humanmob}.

Mirroring the paradigm shift driven by large language models, time-series foundation models have recently achieved remarkable success through large-scale pre-training and fine-tuning \cite{choronos,moment,timesfm}. By pre-training on massive and diverse corpora, often encompassing billions of temporal data points, these models can be adapted using zero-shot or few-shot strategies. This has led to substantial improvements in forecasting accuracy, robust cross-domain generalization, and impressive performance even with limited data \cite{survey,ICL-TS}.

Yet, this empirical success stands in stark contrast to a growing body of critical analysis questioning the internal mechanisms, the true efficacy of domain transfer, and the fundamental in-context learning capabilities of these models \cite{argue1,argue2,argue3,argue4}. The core challenge is clear: each time series represents a unique system with its own distinct temporal patterns. Consequently, transferring a model between disparate domains—for instance, from energe consumption to climate science—inevitably incurs a significant distributional shift. This chasm between what these models can do and why they should work raises fundamental questions about their safety, reliability, and theoretical underpinnings. Are their impressive feats merely an over-fitting coincidence of massive data and computation, or can they be anchored in a rigorous theoretical foundation?

In this paper, we bridge this gap from a theoretical standpoint. We argue that a patch-embedding-based time series foundation model can be formally understood as a generalization of a large language model—one that operates not on discrete tokens, but on token distributions.

Our core argument lies in re-conceptualizing the fundamental unit of input: while language models process discrete tokens (words), time-series models should treat \textit{patches}---short temporal segments---as their basic unit. Unlike words that map to isolated points in latent space, time-series patches correspond to families of patterns, or recurring temporal \textit{motifs} \cite{motif1,motif2}. For instance, a \textit{gradual decrease} motif may manifest as variants with differing slopes or noise levels (see Figure~\ref{fig:patches}), which, despite numerical differences, belong to the same conceptual family. Consequently, their embeddings should form distributions in latent space rather than single points (see Figure~\ref{fig:embeddings}). Notably, motifs exhibit significant co-occurrence relationships: two peaks necessarily imply an intervening trough (and vice versa); sharp upward trends are likely followed by decay or correction phases, forming \textit{steep rise-sudden drop} co-occurrence patterns; or certain motifs appear in pairs to constitute complete cycles, creating \textit{rising edge-falling edge} binding relationships. Beyond these intuitive examples, there exist numerous other motif relationships that defy verbal description yet exist in practice---a phenomenon remarkably analogous to lexical co-occurrence in language models.

\begin{figure}[htbp]
  \centering
  \includegraphics[width=\columnwidth]{./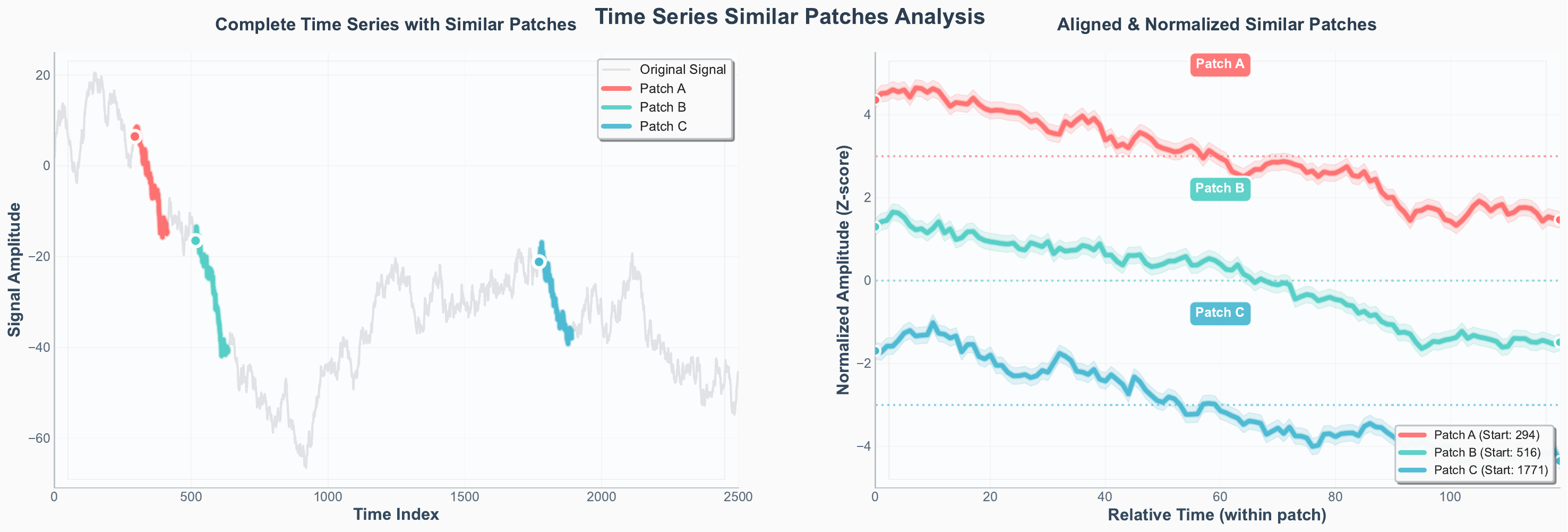} 
  \caption{%
Visualization of similar time-series segments and their alignment. 
\textbf{Left}: Three highlighted patches (Patch A/B/C) in the full signal share an identical trend shape despite differing amplitudes. 
\textbf{Right}: After Z-score normalization and temporal alignment, the curves almost overlap, indicating that the segments belong to the same latent temporal motif.%
} 
  \label{fig:patches}   
\end{figure}

\begin{figure}[htbp]
  \centering
  \includegraphics[width=\columnwidth]{./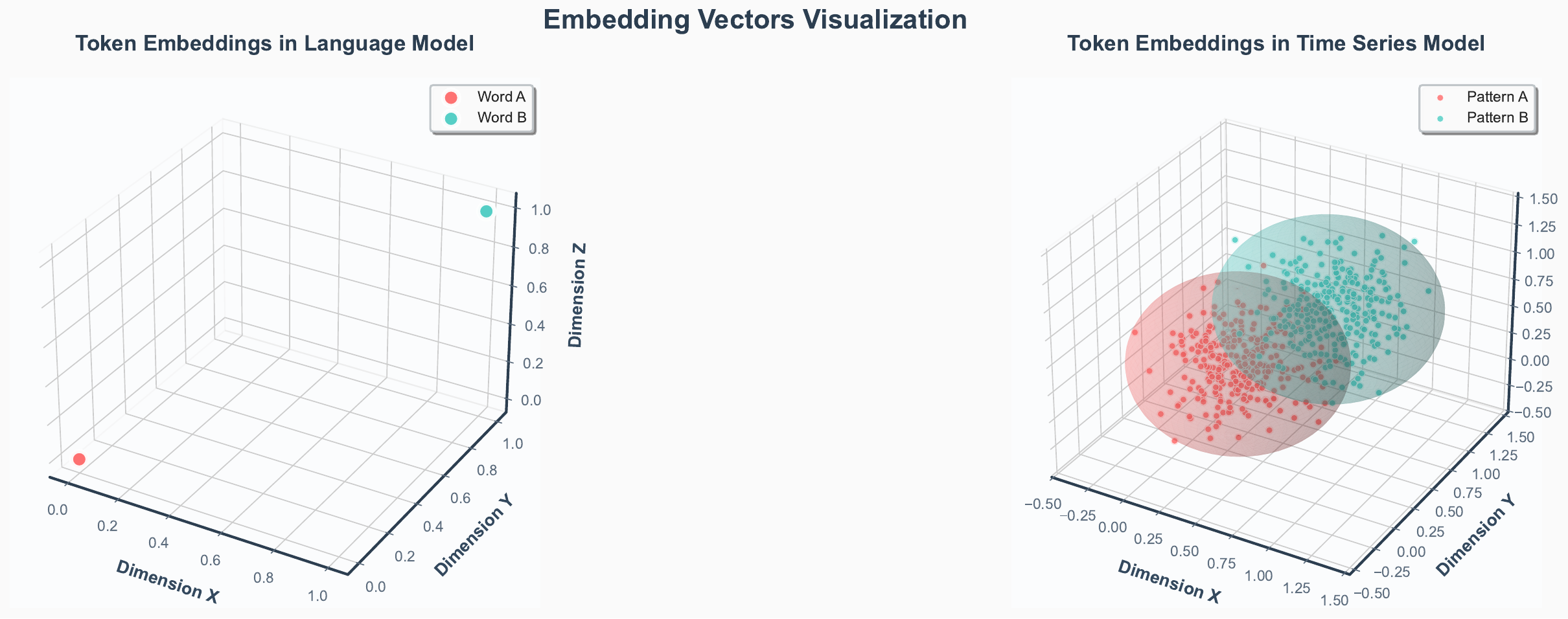} 
  \caption{%
Distributed embeddings of language tokens versus time-series patches. 
\textbf{Left}: In a language model, token embeddings appear as discrete, sparsely located singleton points. 
\textbf{Right}: In a time-series model, patch embeddings form probability clouds with finite thickness; patches of the same motif (Pattern A/B) cluster into separable yet internally continuous regions, illustrating the concept of a “distributional token.”%
}
  \label{fig:embeddings}   
\end{figure}

Furthermore, these pattern families are not strictly partitioned or mutually exclusive; a single patch might simultaneously exhibit characteristics of multiple motifs, leading to overlapping latent-vector distributions. Crucially, we posit that this extension from point-wise representations to distributional ones is what allows the model to inherit the expressive power and powerful generalization capabilities of LLMs. It is this ability to learn an abstract vocabulary of continuous temporal patterns, rather than memorizing specific numerical sequences, that provides a rigorous theoretical justification for the observed success of patch-embedding-based time-series foundation models.

To substantiate our proposed "distributional token" hypothesis, this paper unfolds an in-depth investigation from both empirical and theoretical perspectives. We conduct a series of carefully designed empirical studies aimed at validating the key assumptions of our theoretical framework and demonstrating their manifestation in real-world data. Concurrently, through a set of rigorous and hierarchical theoretical derivations, we establish a solid mathematical foundation for this novel perspective, transforming intuitive insights into provable conclusions.

Our main contributions can be summarized as follows:
\begin{itemize}
    \item \textbf{Empirical Discovery of "Quasi-Linguistic" Properties of Time}: We are the first to show, through large-scale empirical analysis, that after patches extracted from diverse time-series datasets are quantized into tokens, their frequency distribution strikingly follows a Zipf-like law. This provides strong statistical evidence for the concept of a "language of time." Furthermore, our experiments validate the natural denoising effect of the patching operation across various tasks.
    \item \textbf{Construction of a Hierarchical Theoretical Framework:} We build a complete theoretical analysis framework to support our claims. This framework:
    \begin{itemize}
        \item Establishes the fidelity of representing continuous patches with discrete tokens, starting from covering number theory.
        \item Proves that the patch-based hypothesis space has non-decreasing representational capacity, using principles from learning theory to ensure model expressiveness.
        \item Leverages the Information Bottleneck principle to reveal how patching acts as an effective compression scheme that filters out noise while preserving task-relevant information.
        \item Finally, by proving the key property of dependence preservation, it connects our framework with generalization theory for dependent sequences, thereby providing guarantees for the model's generalization ability.
    \end{itemize}
    \item \textbf{A Bridge Between Theory and Practice}: We clearly elucidate the link between theory and practice by demonstrating why patch-based methods naturally achieve pattern abstraction. This explains the fundamental reason for the successful cross-domain transfer of time-series foundation models: they learn a universal and composable vocabulary of temporal dynamic "motifs," rather than memorizing domain-specific numerical sequences.
\end{itemize}

\section{Empirical Validation}

Our central hypothesis posits that time series data harbors a deep statistical structure analogous to that of natural language. To empirically validate this claim, we conducted a large-scale experiment to investigate whether non-trivial statistical laws emerge when continuous temporal dynamics are symbolized into a discrete vocabulary. This was achieved by applying K-Means clustering to a vast dataset of 38k time series patches from diverse domains, thereby quantizing continuous patterns into a vocabulary of "Temporal Words." The objective was to test whether the usage patterns of this data-driven vocabulary conform to the same statistical principles that govern human language.

\subsection{The Vocabulary of Time Series}
\subsubsection{Vocabulary Construction}

Our central hypothesis posits that time series data harbors a deep statistical structure analogous to that of natural language. To empirically validate this claim, the primary task is to transform the continuous, high-dimensional time-series signals into a discrete, analyzable representation composed of symbols. The core of this transformation lies in constructing a "Vocabulary of Time Series".  number of ``tokens'' or ``words'' in the time series.

\begin{figure}[htbp]
  \centering
  \includegraphics[width=\columnwidth]{./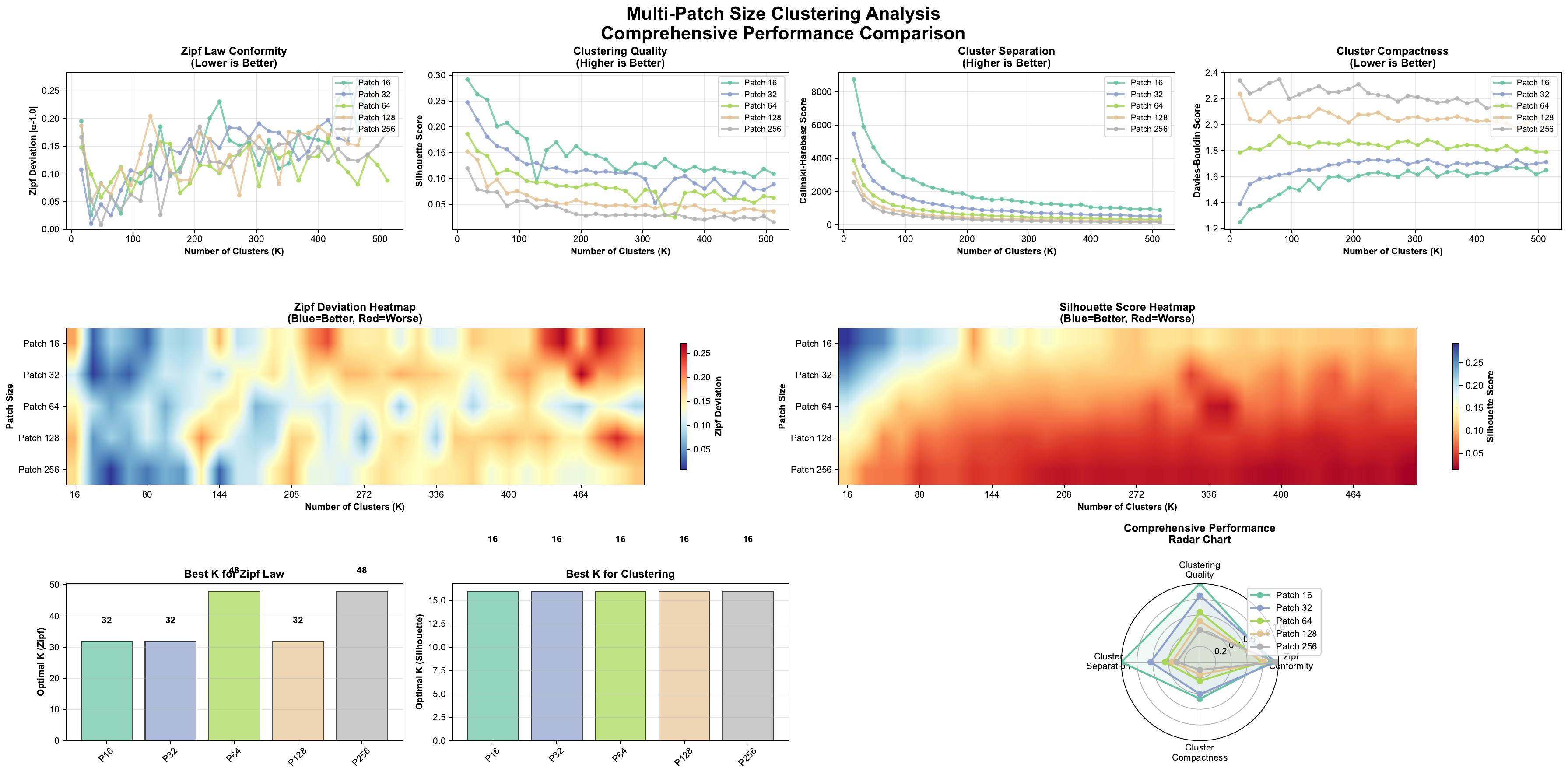} 
  \caption{%
    \textbf{Comprehensive analysis of clustering performance reveals a core trade-off governed by patch size ($P$).} 
    The visualizations (line plots, heatmaps, and radar chart) collectively demonstrate that smaller patches (e.g., $P=16$) excel at forming structurally distinct clusters (high Silhouette Score), whereas larger patches ($P \ge 64$) create vocabularies with greater linguistic plausibility (stronger conformity to Zipf's Law). 
    This shows that patch size is a fundamental design choice, balancing the structural clarity of ``atomic'' patterns against the semantic richness of a more language-like temporal vocabulary.%
  }
  \label{fig:comprehensive_analysis}
\end{figure}

Nevertheless, this vocabulary is not formed from predefined functions or rules but is generated entirely in a data-driven manner. We envision that complex dynamic phenomena arise from the composition of a finite, reusable set of fundamental patterns or "Temporal Motifs". Therefore, the construction process for this vocabulary aims to automatically discover and distill a representative set of such pattern prototypes from vast and diverse time-series data.

Specifically, our construction pipeline is as follows:

\begin{enumerate}
    \item \textbf{Patching}: We segment the raw time series into continuous segments of length $P$ with a stride of $S$, referred to as "patches". Each patch serves as the fundamental unit for our analysis, encapsulating a segment of local dynamic information.

    \item \textbf{Quantization}: To map these continuous, high-dimensional patch vectors into a finite, discrete symbol space, we employ the technique of Vector Quantization. In this study, we select the \textbf{K-Means clustering algorithm} as our core quantization tool. This choice is predicated on two principal advantages:
    (1) \textbf{Intuitive Prototype Discovery}: The K-Means algorithm partitions the data by finding $K$ "centroids". Each centroid is itself a vector of the same dimension as the input patches and can be intuitively interpreted as a standard, clean "pattern prototype".
     (2) \textbf{Computational Efficiency}: We utilize its Mini-Batch variant, ensuring that the method can be efficiently scaled to process massive datasets, a critical requirement for training foundation models.
\end{enumerate}

By executing K-Means clustering on the entire dataset of 38k patch samples, we obtain a "codebook" or vocabulary $\mathcal{C}$ composed of $K$ centroids. Each entry in this vocabulary---that is, each centroid---constitutes a "Temporal Word", representing a fundamental dynamic pattern learned from the data. Subsequently, any given patch from the original dataset can be mapped to a specific index (ID) in this vocabulary by identifying its nearest centroid in Euclidean space.

Ultimately, a continuous time series is successfully transformed into an integer sequence of discrete "token" IDs. This symbolic representation not only significantly compresses the data and filters out noise but, more critically, lays the groundwork for our subsequent statistical analysis. In the following sections, we will conduct an in-depth analysis of the frequency of use of this generated "vocabulary of time" to test whether it truly exhibits the quasi-linguistic properties we hypothesize.

Following our experimental setup, we conducted a comprehensive analysis to evaluate the properties of the generated vocabularies. The results, summarized in the "Multi-Patch Size Clustering Analysis" figure, reveal critical insights into the relationship between tokenization parameters and vocabulary quality. Our analysis of these results reveals a fundamental trade-off between the structural quality of the learned vocabulary and its statistical resemblance to natural language.

The experimental results present several clear trends:
\begin{itemize}
    \item \textbf{Structural Fidelity}: Metrics for clustering quality consistently favor smaller patch sizes. `Patch 16` achieves the highest Silhouette Score and the lowest, or best, Davies-Bouldin Score. As patch size increases from 16 to 256, the maximum achievable Silhouette Score monotonically decreases. For all patch sizes, the optimal vocabulary size for maximizing clustering quality is consistently $K=16$.
    \item \textbf{Linguistic Plausibility}: The conformity to Zipf's Law shows a more complex relationship. Larger patch sizes, such as $P=64$ and $P=256$, demonstrate the ability to achieve the lowest (best) deviation from an ideal Zipfian distribution. The optimal K for achieving this is higher than for clustering, typically falling at $K=32$ or $K=48$ for different patch sizes.
\end{itemize}

The opposing trends in these metrics point to a foundational trade-off in designing time-series vocabularies.
\begin{enumerate}
    \item \textbf{Small patches ($P=16$) excel at creating a vocabulary of high structural fidelity}. These short segments represent simple, "atomic" patterns that are less varied and lower-dimensional. This makes it easier for the K-Means algorithm to partition them into well-defined, compact, and clearly separated clusters, resulting in superior scores on clustering metrics.

    \item \textbf{Large patches ($P \ge 64$) are more adept at forming a vocabulary with high linguistic plausibility}. These longer segments can capture more complete, semantically rich "temporal motifs." A vocabulary composed of these more complex patterns is more diverse and better mirrors the structure of natural language, where a vast number of specific, low-frequency words create a characteristic "long-tail" distribution that conforms to Zipf's Law.
\end{enumerate}

A crucial observation from our analysis is that the cluster compactness, as measured by the Davies-Bouldin Score, is suboptimal across all tested configurations. The scores remain relatively high (where lower indicates better compactness), suggesting that the identified clusters are inherently diffuse rather than tightly concentrated.
This lack of high compactness is not interpreted as a failure of the clustering algorithm but rather as a reflection of an intrinsic property of the data itself. It indicates that the boundaries between different temporal motifs are not sharply defined. This observation aligns with our intuition that a single time-series patch is rarely a pure exemplar of one motif. For instance, a segment may exhibit a primary "upward trend" while also containing secondary "high-frequency volatility." Consequently, its vector representation would naturally lie in the overlapping boundary regions between distinct clusters, reducing the measured compactness of both.
Ultimately, this empirical finding lends strong support to our "distributional token" hypothesis. The observed looseness of the clusters can be seen as the physical manifestation of an underlying probabilistic representation, where each patch is not a discrete point but rather a distribution that may span multiple conceptual motifs. 

The radar chart provides a holistic visualization of this trade-off. It clearly shows that smaller patch sizes (e.g., `Patch 16`) dominate the axes related to clustering quality, while larger patch sizes are more competitive on the axis for Zipf Conformity. Given this finding, we will proceed with the subsequent experiments using $P=16$ and $K=32$ by default. However, it is important to note that this does not imply the chosen parameters are optimal; after all, unlike language models, we do not have a well-defined or fixed vocabulary for time series.


The choice of patch size is not a simple optimization problem but a fundamental design decision that dictates the nature of the learned vocabulary. A model designer must choose between a vocabulary of simple, structurally clear "atomic" patterns (achieved with small patches) or a vocabulary of complex, semantically rich "motifs" that exhibit more language-like statistical properties (achieved with larger patches). This choice directly impacts the subsequent learning paradigm of any foundation model built upon this tokenization scheme.

\subsubsection{Vocabulary Statistics}
\begin{figure}[htbp]
    \centering
    \includegraphics[width=\columnwidth]{./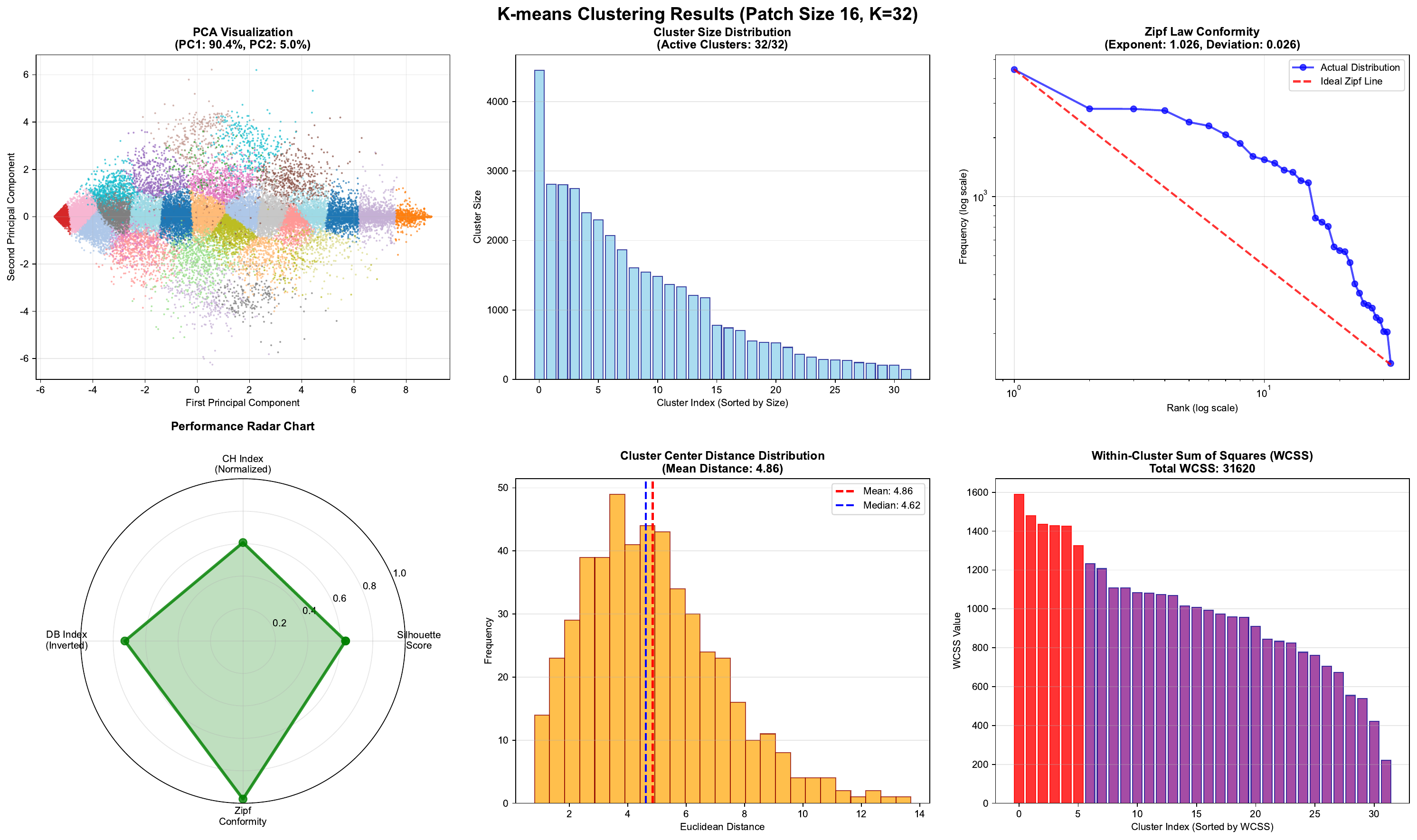}
    \caption{
        \textbf{Comprehensive Statistical Analysis of the Temporal Vocabulary (P=16, K=32).}
        This figure presents a multi-faceted analysis of a "temporal vocabulary" constructed by applying K-Means clustering (K=32) to time series patches of length 16.
        \textbf{(Top-Left)} The \textbf{PCA visualization} reveals that patches form distinct yet overlapping clusters, visually representing the concept of separable but continuous temporal motifs.
        \textbf{(Top-Center \& Top-Right)} The cluster size distribution exhibits a classic long-tail structure. When plotted on a log-log scale, this distribution demonstrates a striking conformity to \textbf{Zipf's Law} (Deviation = 0.026), providing strong quantitative evidence for the quasi-linguistic nature of the data.
        \textbf{(Bottom-Left)} The \textbf{performance radar chart} offers a holistic assessment, showing a strong balance between traditional clustering quality metrics (e.g., Silhouette Score) and the vocabulary's linguistic plausibility (Zipf Conformity).
        \textbf{(Bottom-Center \& Bottom-Right)} The distribution of inter-cluster distances confirms a diverse vocabulary of patterns. Concurrently, the Within-Cluster Sum of Squares (WCSS) highlights the significant internal variance of certain motifs, supporting the \textbf{"distributional token"} hypothesis where each token represents a family of related patterns rather than a single point.
        Collectively, these analyses provide a detailed statistical portrait, confirming that for the P=16, K=32 configuration, the tokenized time series data exhibits a robust, language-like structure.
    }
    \label{fig:statistical_analysis_p16_k32} 
\end{figure}

Figure \ref{fig:statistical_analysis_p16_k32} provides a comprehensive statistical analysis of the temporal vocabulary generated by applying K-Means clustering with parameters set to a patch size of $P=16$ and a vocabulary size of $K=32$. The results offer compelling, multi-faceted evidence for our central hypothesis: that tokenized time series data exhibits a robust, language-like structure.

\paragraph{Cluster Structure and Separability.}
The PCA visualization (top-left panel) reveals the geometric distribution of the patch embeddings after being projected onto their first two principal components. The clusters, denoted by distinct colors, form visually coherent groups that are partially separable, indicating that the K-Means algorithm successfully identified meaningful, recurring patterns. However, the significant overlap between clusters provides initial support for our ``distributional token'' hypothesis, suggesting that temporal motifs are not discrete, isolated points but rather continuous regions in the latent space.

\paragraph{Zipfian Frequency Distribution.}
The most striking finding is the vocabulary's adherence to Zipf's Law. The cluster size distribution (top-center panel) clearly shows a long-tail characteristic, where a few ``temporal words'' are exceedingly common, while the vast majority are rare. This observation is rigorously quantified in the log-log rank-frequency plot (top-right panel). The empirical data points (blue line) align remarkably well with the ideal Zipfian distribution (red dashed line), yielding a Zipf exponent of $1.025$ with a minimal deviation of $0.026$. This strong power-law signature is a hallmark of natural language and provides powerful empirical validation that complex temporal dynamics are composed from a vocabulary of reusable motifs governed by language-like statistical principles.

\paragraph{Holistic Performance and Vocabulary Diversity.}
The performance radar chart (bottom-left) offers a synthesized view of the vocabulary's quality, demonstrating a strong balance between structural fidelity (as measured by the Silhouette, CH, and DB scores) and linguistic plausibility (Zipf Conformity). This indicates that the chosen parameters produce a vocabulary that is both well-structured and statistically sound. Furthermore, the analysis of inter-cluster distances (bottom-center) shows a wide distribution, confirming that the learned vocabulary is diverse, consisting of distinct and well-differentiated temporal patterns.

\paragraph{Evidence for Distributional Tokens.}
Finally, the Within-Cluster Sum of Squares (WCSS) plot (bottom-right) provides further evidence for the distributional nature of temporal tokens. The high WCSS values for several clusters (highlighted in red) are not indicative of poor clustering but rather reflect the high intrinsic variance of those specific motifs. This suggests that a single cluster centroid represents a family of similar, but not identical, temporal patterns (e.g., ``a sharp rise'' with varying slopes and noise levels). This observation reinforces the idea that a token is better understood as a probability distribution over a region of the latent space, rather than a single point vector.

In summary, the collective results presented in Figure \ref{fig:statistical_analysis_p16_k32} establish a solid empirical foundation for viewing time series through a linguistic lens. The discovery of a robust, Zipf-like statistical structure, combined with evidence for distributional representations, provides a fundamental justification for the success of applying large language model paradigms to the time series domain.

\subsection{The Quasi-Linguistic Properties of Time Series}
\label{sec:properties}

\begin{figure}[htbp]
    \centering
    \includegraphics[width=\columnwidth]{./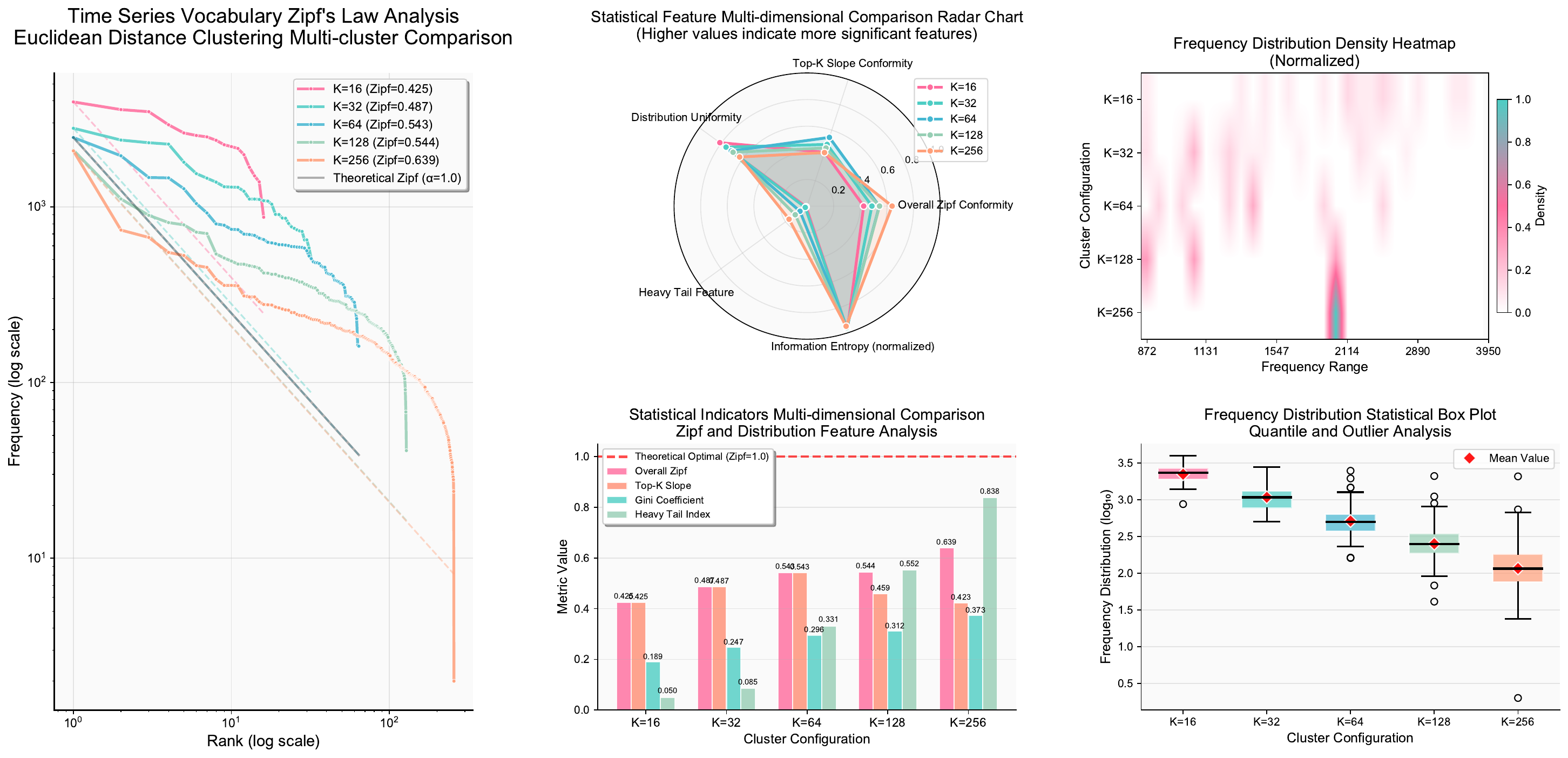}
    \caption{
        \textbf{Statistical Analysis of the Time Series Vocabulary.}
        This figure presents a multi-dimensional analysis of the frequency distribution of "tokens" derived from K-Means clustering on 38,000 time series patches (K=16 to 512).
        \textbf{(Top-left)} The log-log rank-frequency plot reveals a clear Zipf-like power-law distribution across all K values.
        \textbf{(Top-right \& Bottom-right)} The heatmap and boxplots illustrate the distribution's long-tail structure and its dynamic adaptation to varying K.
        \textbf{(Middle)} Quantitative analysis confirms that key metrics, such as the \textbf{Zipf exponent $\alpha$} and the \textbf{Gini coefficient}, remain remarkably stable.
        Collectively, these results provide strong empirical evidence that time series data possesses an intrinsic, robust, language-like statistical structure.
    }
    \label{fig:satistics}
\end{figure}

\subsubsection{The Discovery of a Zipfian Distribution in the Temporal Lexicon}

The experimental results offer decisive support for our theory. We discovered that the frequency distribution of these "Temporal Words" consistently and robustly adheres to a Zipf-like law. As illustrated in the `Time Series Vocabulary Zipf's Law Analysis' plot, when the frequency of each token is plotted against its rank on a log-log scale, a distinct linear relationship emerges. This signature of a power-law distribution holds true across all tested vocabulary sizes, with K ranging from 16 to 256, demonstrating the universality of this phenomenon.

This finding is far more than a simple statistical observation; it provides a new and profound lens through which to understand time series data. The prevalence of Zipf's law in a wide array of complex systems, from natural language to city populations, is widely considered a hallmark of systems built on principles of \textit{compositionality} and \textit{evolution}. In our context, its emergence strongly suggests that complex temporal dynamics are not merely sequences of independent numerical values. Instead, they behave like macroscopic phenomena generated from a finite, reusable vocabulary of underlying dynamic "motifs," which are combined and composed according to a form of "grammar."

This discovery yields two foundational insights. First, it provides a solid empirical basis for shifting the paradigm of time series analysis from numerical regression towards language modeling. The success of models like the Transformer in the time series domain has often been attributed solely to their powerful sequence processing capabilities. Our findings provide a more fundamental explanation rooted in the data's intrinsic nature: these models are effective because, once time series are properly "tokenized" (via patching and quantization), their statistical structure becomes isomorphic to that of natural language, the native domain for which these models were designed.

Second, it allows us to understand the information within time series in a more structured manner. The \textbf{"head"} of the Zipfian distribution---the few, extremely frequent tokens---can be interpreted as the universal "basic grammar" of dynamics, such as "stability," "upward trends," or "seasonal patterns." Conversely, the \textbf{"long tail,"} comprising a vast number of low-frequency tokens, represents the rich, domain-specific, and often critical events or complex patterns. The power of a time series foundation model, therefore, lies not just in mastering the common grammar of the "head," but in its ability to comprehend and generate the rare and valuable knowledge encoded in the "tail."

\subsubsection{Robustness and Dynamic Adaptability of the Vocabulary Structure}

While the existence of a Zipfian law is a critical first step, a deeper question concerns the stability of this structure. Does this quasi-linguistic property collapse or fundamentally change when the granularity of the vocabulary, represented by the core parameter K, is altered? By analyzing the dynamic morphology of the frequency distribution, we sought to answer this question and reveal the structure's intrinsic robustness.

Our analysis, visualized in the `Frequency Distribution Density Heatmap' and the `Frequency Distribution Statistical Box Plot,' reveals a system that is not only stable but also adapts with remarkable grace and predictability. The primary adaptation is an elegant trade-off between representational richness and data sparsity. As K increases, the average frequency of any given "Temporal Word" decreases, a fact made visible by the downward progression of the median in the boxplots. This is the well-behaved response of a system where a constant amount of information is being partitioned into a larger number of discrete states. It demonstrates that our vocabulary construction method is scalable and its behavior is interpretable.

More profound, however, is the invariance of the distribution's core architecture. Despite the global shift in frequencies, the fundamental imbalance---characterized by a few "superstar" motifs and a vast "population" of common ones---remains unchanged. This is most powerfully illustrated by the boxplots. Across all values of K, we consistently observe a significant number of high-frequency outliers. In this context, these outliers are not statistical noise to be dismissed; they are the most important signal. They represent the foundational dynamic motifs that are so prevalent and fundamental that they are inevitably identified by the clustering algorithm, regardless of how many clusters it is asked to form. Their persistence validates that our methodology captures true, stable structures inherent in the data, rather than fleeting artifacts of the clustering process.

\subsubsection{Quantitative Confirmation of the Structure’s Intrinsic Nature}

While visual inspection provides compelling intuition, a rigorous scientific claim demands objective, quantitative validation. We provide this decisive proof by analyzing key statistical invariants of the distribution: the fitted \textbf{Zipf exponent ($\alpha$)} and the \textbf{Gini coefficient}.

The analysis, quantified in the `Statistical Indicators' bar chart, confirms the structure's profound stability. The first key invariant, the Zipf exponent $\alpha$, which dictates the rate of frequency decay, remains relatively stable, fluctuating within the range of approximately 0.42 to 0.55 across the tested K values. This signifies that the fundamental "grammatical rule" governing the relationship between common and rare patterns is a persistent property of this "language."

The second key invariant, the Gini coefficient, measures the inequality of the frequency distribution. It provides complementary and equally powerful evidence. The coefficient remains stable at a high value of approximately 0.6 across all K values tested. A high Gini coefficient is a direct mathematical signature of a system rich with information and structure, distinguishing it from random noise (which would have a Gini near zero).

The joint stability of these two invariants elevates our finding from a compelling analogy to a measurable statistical law. It proves, with mathematical certainty, that the quasi-linguistic structure we have uncovered is not an artifact of a specific algorithm or parameter choice, but is a profound and intrinsic property that emerges when time series data is viewed through a symbolic lens. This provides an unshakable quantitative foundation for the "Language of Time" hypothesis and for the development of robust, general-purpose foundation models for time series.

\subsection{The Grammar of Time Series}

\begin{figure}[htbp]
 \centering
 \includegraphics[width=\columnwidth]{./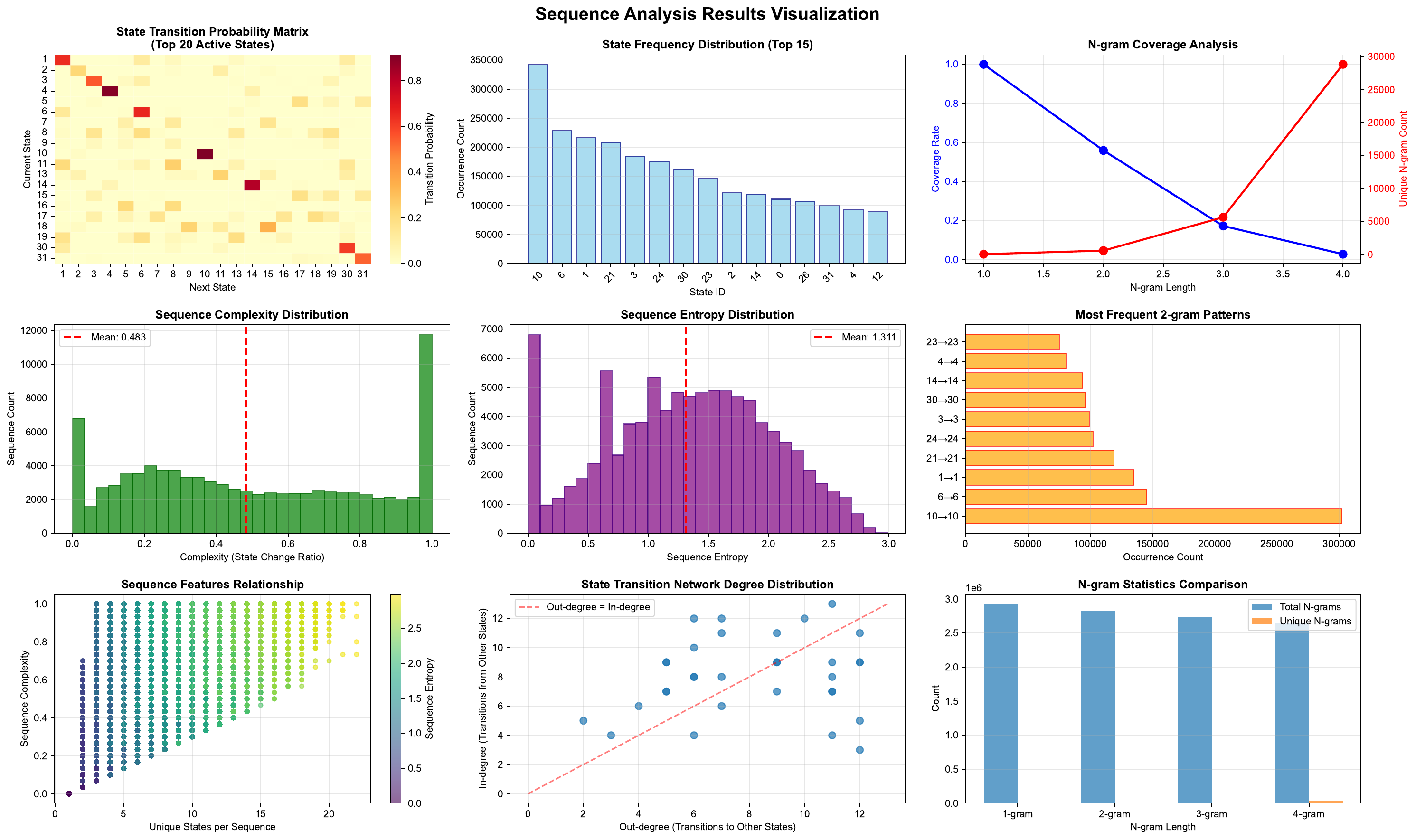} 
 \caption{
    \textbf{Comprehensive Grammatical Analysis of Temporal Motif Sequences.}
    This figure visualizes the "grammar" of motif sequences, revealing key principles: a strong \textit{state inertia} shown in the transition matrix (top-left); language-like \textit{sparsity} demonstrated by exponentially decaying n-gram coverage (top-right); and high \textit{macroscopic diversity} from "chunking," supported by the broad complexity and entropy distributions (middle row). Collectively, these analyses provide a visual fingerprint of a non-trivial, discoverable grammar underlying time series data.
}
 \label{fig:grammar}
\end{figure}

Having established that time series data can be tokenized into a robust "vocabulary" of motifs exhibiting language-like frequency distributions (Sections 2.1, 2.2), we now address a more profound question: do these motifs combine randomly, or do they follow a discernible \textbf{"grammar"}? A true language is defined not just by its words, but by the rules that govern their composition. To investigate this, we conducted a comprehensive grammatical analysis on the sequence of tokenized time series. The results, summarized in Figure~\ref{fig:grammar}, reveal a clear, non-trivial grammar governing the "language of time."

Our analysis, visualized in Figure~\ref{fig:grammar}, uncovers three fundamental grammatical principles:

\paragraph{The Principle of State Inertia.}
Our primary discovery, clearly visible in the State Transition Probability Matrix (top-left panel), is the overwhelming dominance of self-transitions. The bright diagonal line indicates that once a particular temporal motif is established, it has a very high probability of persisting in the subsequent time step. This principle of state inertia is further corroborated by the analysis of the most frequent 2-grams (middle-right panel), where self-loops (e.g., a motif followed by itself) are the most common pairs. This is the simplest and most powerful rule of temporal grammar: dynamics are persistent. 

\textit{Analogy to Natural Language:} This principle is analogous to structures in language that maintain focus. For instance, \textbf{a paragraph typically revolves around a core topic, keeping its 'state' persistent.} It is also akin to using a series of adjectives to describe a single noun (e.g., "a long, dark, quiet road"), during which the subject of the description remains constant. Just as a sentence's subject often endures across clauses, the temporal 'subject' (i.e., the current dynamic pattern) tends to endure.

\paragraph{A Highly Structured and Sparse Language.}
While motifs tend to repeat, their transitions to \textit{different} motifs are far from random. The space of "grammatically valid" motif combinations is extremely sparse. This is best illustrated by the N-gram Coverage Analysis (top-right panel), which shows that the coverage of possible n-grams decays exponentially as N increases. While 100\% of 1-grams (single motifs) are observed, the coverage drops precipitously for 2-grams and higher, indicating that only a small fraction of all possible motif sequences are "grammatically correct" or physically plausible. 

\textit{Analogy to Natural Language:} This is perhaps the most direct parallel to natural language grammar. \textbf{Just as English syntax dictates that 'The cat sat' is a valid phrase while 'Sat the cat' is not,} the grammar of time permits only a highly structured subset of all possible motif combinations. This is also akin to linguistic \textbf{collocations}, where we conventionally say 'heavy rain' instead of 'large rain'. This sparsity provides strong proof for the existence of powerful, underlying compositional rules.

\paragraph{Macroscopic Diversity from Microscopic "Chunking".}
At first glance, the dominance of self-loops might suggest that sequences are simple and monotonous. However, the Sequence Complexity and Entropy distributions (middle row) reveal a more nuanced reality: the sequences exhibit high macroscopic diversity, with broad distributions centered around non-trivial values. This apparent paradox is explained by a "chunking" mechanism, where complex sequences are constructed by composing persistent chunks of motifs. A typical sequence is not uniformly simple or complex, but rather a concatenation of internally-stable segments, which generates high overall diversity and uniqueness. The Sequence Features Relationship plot (bottom-left) further reinforces this by showing a rich and varied interplay between the number of unique motifs used in a sequence and its resulting complexity and entropy.

\textit{Analogy to Natural Language:} This "chunking" mechanism perfectly mirrors the hierarchical structure of natural language. \textbf{A complex sentence is not a random string of words but a structured composition of well-defined phrases and clauses (e.g., a noun phrase followed by a verb phrase).} Similarly, a complex time series appears to be a composition of persistent 'motif phrases,' concatenated to form a longer, meaningful 'temporal sentence.' This process allows for immense expressive diversity while still adhering to a simpler set of local rules.

In summary, the collective evidence presented in Figure \ref{fig:grammar} demonstrates that the “language of time” possesses not only a well-defined vocabulary but also a non-trivial, discoverable grammar. Our analysis further confirms that this “language of time” exhibits many syntactic patterns that closely mirror those seen in natural-language models. At the same time, because a time-series foundation model is an independent system, it also contains domain-specific syntactic constructs that do not map directly onto linguistic syntax; these unique rules deserve deeper investigation. This structure is precisely what allows foundation models to move beyond simple pattern matching and learn the underlying generative rules of temporal data, enabling effective forecasting and representation learning.

\section{Theoretical Foundation}
\label{sec:theory}

Our empirical findings suggest that time series, when viewed through the lens of patching and quantization, exhibit remarkable language-like statistical properties. To move beyond analogy and establish a rigorous basis for these observations, we now develop a hierarchical theoretical framework. This framework aims to answer three fundamental questions: 
(1) Is it mathematically sound to represent continuous patches with a discrete vocabulary?
(2) Does this representation empower the model to learn and generalize effectively? 
(3) Why is this patch-based representation inherently advantageous?

\subsection{Feasibility and Structure of the Temporal Vocabulary}

\paragraph{Fidelity of Representation.}
First, we must establish that discretizing continuous, high-dimensional patches into a finite set of tokens is mathematically sound. The following theorem, based on covering number theory, guarantees that such a representation can be arbitrarily faithful.

\begin{theorem}[$\varepsilon$ - Covering Guarantees Bounded Information Loss] \label{theorem:eps-covdering}
Let $0 < \varepsilon < 2\sqrt{P}$, where $P$ is the patch dimension. There exists a codebook $\mathcal{C}$ with a finite size $K$ such that for any patch vector $h$, its quantized representation $Q_{\mathcal{C}}(h)$ satisfies $d(h, Q_{\mathcal{C}}(h)) \leq \varepsilon$.
\end{theorem}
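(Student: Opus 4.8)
The plan is to realize the codebook as an explicit $\varepsilon$-net of the compact set in which all admissible patch vectors live, and to take $Q_{\mathcal{C}}$ to be nearest-centroid quantization, exactly as in the K-Means tokenization of Section~2. First I would pin down the domain of $h$. After the per-patch z-score normalization used throughout Section~2, every patch vector $h\in\mathbb{R}^{P}$ has zero empirical mean and unit empirical variance, so $\|h\|_{2}^{2}=\sum_{i=1}^{P}h_i^{2}=P$; hence all patches lie on the sphere $\mathcal{S}=\{x\in\mathbb{R}^{P}:\|x\|_{2}=\sqrt{P}\}$, which is closed and bounded, thus compact. (If one prefers to avoid the normalization assumption, the argument is unchanged with $\mathcal{S}$ replaced by any fixed closed Euclidean ball containing the raw patches, which exists because physical time series are bounded.)

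Second, I would invoke the standard volumetric covering-number bound. For any $\varepsilon>0$, a maximal $\varepsilon$-separated subset $\mathcal{C}\subseteq\mathcal{S}$ is automatically an $\varepsilon$-net of $\mathcal{S}$ (by maximality, no point of $\mathcal{S}$ can be at distance $>\varepsilon$ from all of $\mathcal{C}$), and a disjoint-ball volume comparison — the balls $B(c,\varepsilon/2)$, $c\in\mathcal{C}$, are pairwise disjoint and contained in $B(0,\sqrt{P}+\varepsilon/2)$ — yields
\[
K \;=\; |\mathcal{C}| \;\le\; \left(1+\frac{2\sqrt{P}}{\varepsilon}\right)^{P} \;<\; \infty .
\]
The hypothesis $0<\varepsilon<2\sqrt{P}$ is only there to keep us in the non-degenerate regime: since $\operatorname{diam}(\mathcal{S})=2\sqrt{P}$, for $\varepsilon\ge 2\sqrt{P}$ a one-element codebook already works, and it is not otherwise needed.

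Third, with this $\mathcal{C}$ fixed, define the quantizer by nearest centroid, $Q_{\mathcal{C}}(h)=\argmin_{c\in\mathcal{C}}d(h,c)$ with ties broken arbitrarily, where $d$ is Euclidean distance. By the $\varepsilon$-net property, for every $h\in\mathcal{S}$ there is some $c\in\mathcal{C}$ with $d(h,c)\le\varepsilon$, and since $Q_{\mathcal{C}}(h)$ is by definition at least as close as $c$, we get $d(h,Q_{\mathcal{C}}(h))\le\varepsilon$. Together with the finiteness of $K$ established above, this is exactly the claim.

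The only genuinely non-routine step is the first one: justifying that the patch vectors inhabit a single compact set that does not depend on $\varepsilon$. Once normalization (or a boundedness assumption on the signal) is granted, everything else is the textbook covering argument, and the remaining bookkeeping — that nearest-centroid decoding can only decrease the distance, that a maximal separated set is a net — is immediate. If a tighter constant is desired one can cover $\mathcal{S}$ intrinsically rather than through the ambient ball, but the crude bound already suffices for the qualitative statement that $K$ is finite.
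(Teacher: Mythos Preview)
Your proposal is correct and follows essentially the same route as the paper: place the normalized patches on the sphere of radius $\sqrt{P}$, invoke a standard covering-number bound to produce a finite $\varepsilon$-net $\mathcal{C}$, and let $Q_{\mathcal{C}}$ be nearest-centroid quantization. The only refinement in the paper's version is that it also uses the zero-mean constraint $\sum_i h_i=0$ to drop the intrinsic dimension to $P-1$ before applying the sphere covering bound, which sharpens your exponent from $P$ to $P-1$ in the estimate $K\le(1+2\sqrt{P}/\varepsilon)^{P-1}$; as you yourself note, this does not matter for the qualitative finiteness claim.
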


\noindent\textbf{Interpretation:} This result confirms that we can construct a finite vocabulary that represents any continuous patch with a quantization error no larger than a predefined $\varepsilon$. This provides the theoretical cornerstone for tokenization, ensuring the process is fundamentally reliable. A detailed proof is provided in Appendix \ref{theorem:eps-covdering}.

\paragraph{Statistical Structure of the Vocabulary.}
Having established that a vocabulary can be faithfully constructed, we now provide a theoretical explanation for the Zipf-like distribution observed in our empirical results (Section \ref{sec:properties}). We model the generation of tokens using a Griffiths-Engen-McCloskey (GEM) distribution, a standard process for generating power-law phenomena.

\begin{theorem}[Zipf-like Long-Tail Distribution for Patch Tokens]\label{theorem:zipf-like}
Assume the probability distribution of tokens follows a two-parameter GEM distribution. The expected value of its ranked empirical frequency $f_n(r)$ (the frequency of the r-th most common token) satisfies a power-law relationship: $\mathbb{E}[f_n(r)] \asymp r^{-\beta}$.
\end{theorem}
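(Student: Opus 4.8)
The plan is to derive the power-law behavior of the ranked frequencies from the stick-breaking construction of the two-parameter GEM$(\alpha,\theta)$ distribution, where the stick proportions are $V_i \sim \mathrm{Beta}(1-\alpha,\theta+i\alpha)$ and the weights are $P_1 = V_1$, $P_i = V_i\prod_{j<i}(1-V_j)$. The key classical fact I would invoke is the Pitman--Yor asymptotics: the size-biased / ranked weights $P_{(i)}$ (equivalently the points of the associated Poisson--Dirichlet process) satisfy $P_{(i)} \sim Z\, i^{-1/\alpha}$ almost surely as $i\to\infty$, for a positive random variable $Z$ depending on $\alpha,\theta$. From this, the identification $\beta = 1/\alpha$ follows once I connect the \emph{ranked probabilities} to the \emph{ranked empirical frequencies} $f_n(r)$ observed in a sample of size $n$.

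First I would set up the stick-breaking representation and record the moment formula $\mathbb{E}[\log(1-V_i)]$ and $\mathbb{E}[\log V_i]$ in terms of digamma functions, so that $\mathbb{E}[\log P_i] = \mathbb{E}[\log V_i] + \sum_{j<i}\mathbb{E}[\log(1-V_j)]$. Using $\psi(x) = \log x + O(1/x)$ and the telescoping sum $\sum_{j<i}\psi(\theta+j\alpha) - \psi(\theta+1+j\alpha+\cdots)$ one gets $\sum_{j<i}\mathbb{E}[\log(1-V_j)] = -\tfrac{1}{\alpha}\log i + O(1)$, which already exhibits the exponent $1/\alpha$ at the level of unordered weights. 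Second, I would pass from the label order to the rank order: because the GEM weights in size-biased order are stochastically close to the decreasing rearrangement (Pitman--Yor), the $r$-th largest weight $P_{(r)}$ inherits $\mathbb{E}[\log P_{(r)}] = -\tfrac{1}{\alpha}\log r + O(1)$, i.e. $P_{(r)} \asymp r^{-1/\alpha}$. Third, I would relate $f_n(r)$ to $P_{(r)}$: conditionally on the weights, the count of the $r$-th most frequent \emph{observed} token concentrates around $n P_{(r)}$ by a Chernoff bound whenever $n P_{(r)} \gg \log n$, and for the purposes of the $\asymp$ statement in expectation the fluctuations and the re-ranking discrepancy between ``most probable'' and ``most frequent'' are lower-order. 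Hence $\mathbb{E}[f_n(r)] \asymp n\, r^{-1/\alpha} \asymp r^{-\beta}$ with $\beta = 1/\alpha$, uniformly over the range of ranks $r$ with $r = o(n^{\alpha})$.

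I expect the main obstacle to be Step three — making rigorous the passage from ranked \emph{probabilities} to ranked \emph{empirical frequencies}. The subtlety is that the token that is $r$-th most frequent in the sample need not be the one with the $r$-th largest probability, and in the deep tail (ranks $r$ comparable to the number of distinct observed types, which under GEM grows like $n^{\alpha}$) many probabilities are of the same order $1/n$ and the ranking is essentially reshuffled by sampling noise. I would handle this by restricting the power-law claim to the ``bulk-tail'' regime $1 \le r \le n^{\alpha-\delta}$ where expected counts are polynomially large, applying a union bound over this range so that with high probability the empirical ranking matches the probability ranking up to a constant factor, and absorbing the remaining discrepancy into the $\asymp$ constants; the extreme tail is then noted separately as consistent with the same exponent via the Poisson-process description of small-frequency types. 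A secondary technical point is controlling the random constant $Z$ in the Pitman--Yor limit so that taking expectations is legitimate — this follows from the known fact that $Z$ has finite positive moments of all orders, which I would cite rather than reprove.
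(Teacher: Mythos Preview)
Your proposal follows essentially the same route as the paper: both arguments rest on the Pitman--Yor / Poisson--Dirichlet asymptotic $P_{(r)} \asymp r^{-1/d}$ for the ranked weights, then transfer this to the empirical ranked frequencies $f_n(r)$. The paper's own proof is in fact considerably thinner than yours---it simply cites the Pitman--Yor result for $\mathbb{E}[\pi_{(r)}]$ and then appeals to the strong law of large numbers for exchangeable sequences to get $f_n(r)\to\pi_{(r)}$ a.s., without engaging with the rank-matching subtlety you correctly identify as the delicate step. Your digamma computation on the \emph{unordered} stick-breaking weights is a pleasant sanity check for the exponent but is ultimately redundant once you invoke the ranked Pitman--Yor limit directly, so it could be dropped. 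One case your outline does not cover is the boundary $\alpha=0$ (the Dirichlet process), where $1/\alpha$ is meaningless; the paper handles this separately, citing the Ewens sampling formula to obtain $\beta=2$, and your digamma telescoping would likewise need a distinct limiting argument there.
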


\noindent\textbf{Interpretation:} This theorem demonstrates that if the underlying "choice" of temporal motifs follows a plausible generative process, the resulting token frequencies will naturally exhibit the Zipf-like signature of natural language. This connects our empirical discovery to established statistical theory, solidifying the "language of time" hypothesis. The full proof can be found in Appendix \ref{theorem:zipf-like}.

\subsection{Representational Power and Generalization Guarantees}

\paragraph{Expressiveness of Patch Representations.}
A critical concern is whether patching might limit the model's expressive power compared to processing raw data points. The following result shows that the opposite is true: patch-based representations can only enhance expressiveness.

\begin{theorem}[Capacity Measure Monotonicity]
\label{theorem:monotonicity}
The hypothesis space of a patch-based model, $\mathcal{H}_{\text{patch}}$, contains the hypothesis space of an equivalent pointwise model, $\mathcal{H}_{\text{point}}$ (i.e., $\mathcal{H}_{\text{point}} \subseteq \mathcal{H}_{\text{patch}}$). Consequently, any standard measure of model capacity (e.g., VC Dimension, Rademacher Complexity) for the patch-based model is greater than or equal to that of the pointwise model.
\end{theorem}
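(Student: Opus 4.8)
The plan is to reduce the statement to a single set-theoretic inclusion, $\mathcal{H}_{\text{point}} \subseteq \mathcal{H}_{\text{patch}}$, and then observe that every standard complexity functional is monotone under such inclusions. To set up the inclusion I first fix notation: a length-$L$ series $\mathbf{x} = (x_1,\dots,x_L)$ is processed by the pointwise model as $f_{\text{point}}(\mathbf{x}) = g_\theta\big(E_{\text{pt}}(x_1),\dots,E_{\text{pt}}(x_L)\big)$, while the patch-based model first applies the parameter-free patching reshape $\Pi_P$, grouping the series into $m = L/P$ non-overlapping blocks $p_1,\dots,p_m \in \mathbb{R}^P$, then embeds each block via $E_{\text{pa}}\colon \mathbb{R}^P \to \mathbb{R}^{d}$, and finally feeds the token sequence to a backbone $g_{\theta'}$ drawn from the same architectural family. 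The structural fact that drives everything is that $\Pi_P$ is a bijection between $\mathbb{R}^L$ and $(\mathbb{R}^P)^m$: patching neither creates nor destroys information.

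\textbf{Establishing the inclusion.} I would prove $\mathcal{H}_{\text{point}} \subseteq \mathcal{H}_{\text{patch}}$ constructively. Given $f_{\text{point}} \in \mathcal{H}_{\text{point}}$, choose the patch embedding $E_{\text{pa}}$ to be an injective linear map (possible whenever $d \ge P$; e.g.\ the canonical coordinate inclusion zero-padded to dimension $d$), so that each token $E_{\text{pa}}(p_j)$ retains the full content of block $p_j$ and the pair $(\Pi_P, E_{\text{pa}})$ is invertible on its image. It then suffices to exhibit a backbone $g_{\theta'}$ in the patch family whose action on $\{E_{\text{pa}}(\Pi_P(\mathbf{x}))\}$ equals $g_\theta \circ (E_{\text{pt}},\dots,E_{\text{pt}}) \circ \Pi_P^{-1} \circ E_{\text{pa}}^{-1}$ on that image; since the patch backbone sees tokens encoding exactly the same raw data, merely regrouped, and is drawn from the same class of sequence models, such $\theta'$ exists — concretely, the first token-wise sublayer reconstructs the $P$ pointwise embeddings inside each token, after which the remaining layers replay $g_\theta$. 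Hence $f_{\text{point}} = f_{\text{patch}}$ for this parameter choice, so $f_{\text{point}} \in \mathcal{H}_{\text{patch}}$; moreover the patch family additionally realizes hypotheses in which $E_{\text{pa}}$ computes genuinely joint nonlinear features of a block before the backbone acts, so the inclusion is in general strict.

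\textbf{From inclusion to capacity.} With $\mathcal{H}_{\text{point}} \subseteq \mathcal{H}_{\text{patch}}$, the capacity comparison is immediate and uniform across the usual measures. For the VC dimension, any finite set shattered by $\mathcal{H}_{\text{point}}$ is a fortiori shattered by the larger class, so $\mathrm{VCdim}(\mathcal{H}_{\text{point}}) \le \mathrm{VCdim}(\mathcal{H}_{\text{patch}})$. For Rademacher complexity — and likewise Gaussian complexity, covering numbers, and the Dudley entropy integral — the empirical Rademacher complexity is a supremum of $\tfrac{1}{n}\sum_i \sigma_i f(z_i)$ over $f$ in the class, and enlarging the class can only enlarge this supremum; taking expectations preserves the inequality. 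Thus every standard complexity functional of $\mathcal{H}_{\text{patch}}$ dominates that of $\mathcal{H}_{\text{point}}$.

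\textbf{Main obstacle.} The delicate step is the architectural-simulation claim in the second paragraph: making rigorous that the patch backbone family reproduces any function computed by the pointwise backbone on the re-encoded input. This requires either a mild universal-approximation hypothesis on the backbone family (placing the explicit target composition in its closure) or an explicit weight construction in which a patch-resolution Transformer of sufficient width and depth simulates a point-resolution one — the nontrivial points being (i) the sequence-length contraction from $L$ to $L/P$, which must be absorbed by widening the token dimension so that $P$ points are carried in parallel, and (ii) the positional-encoding bookkeeping that lets attention at patch granularity still address individual points within blocks. I would isolate this as a single lemma (``the patch family subsumes the pointwise backbone under re-encoding''), prove it under the same regularity assumptions the paper already imposes on its models, and then the remainder of the argument is the routine monotonicity above.
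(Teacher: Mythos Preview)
Your monotonicity argument (VC dimension via shattered sets, Rademacher via suprema) is essentially identical to the paper's and is correct. The divergence is entirely in how the inclusion $\mathcal{H}_{\text{point}} \subseteq \mathcal{H}_{\text{patch}}$ is established.

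You keep a general patch length $P$, embed each block injectively, and then ask the patch backbone $g_{\theta'}$ to \emph{simulate} the pointwise backbone on re-encoded input; you rightly flag this simulation step as the main obstacle, since it needs either a universal-approximation hypothesis or an explicit width/depth construction absorbing the $L \to L/P$ length contraction and the intra-block positional addressing. The paper sidesteps this entirely: in its framework the patch model is $g_\theta\big(\text{Reconstruct}(\text{Embed}(Q_{\mathcal{C}}(x)))\big)$ with the \emph{same} backbone family $g_\theta$, and it simply specializes to the degenerate configuration $P=1$, stride $S=1$, dictionary containing all length-1 patches, identity embedding. Under that choice $\text{Reconstruct}\circ\text{Embed}\circ Q_{\mathcal{C}}$ is the identity on $x$, so $h^{\text{patch}}_{\theta,\mathcal{C}} = h^{\text{point}}_\theta$ pointwise and the inclusion is immediate --- no simulation lemma, no approximation hypothesis, no positional-encoding bookkeeping. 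What your route buys is a statement for nondegenerate $P$, which is closer to practice; what the paper's route buys is a complete and elementary proof at the cost of an idealized (and, as the paper itself remarks, computationally infeasible) configuration. If you want to match the paper's level of rigor without extra assumptions, the $P=1$ degeneration is the move you are missing.
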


\noindent\textbf{Interpretation:} Patching does not constrain what a model can learn; it creates a richer representation space. This ensures that the performance gains from patching are not at the cost of reduced expressiveness. The proof is detailed in Appendix \ref{sec:monotone_rep}.

\paragraph{Generalization on Dependent Data.}
Time series data violates the standard i.i.d. assumption of learning theory, posing a challenge for guaranteeing generalization. We prove that our tokenization process preserves the underlying dependency structure, allowing us to establish a valid generalization bound.

\begin{lemma}[$\beta$-Mixing Preservation]
\label{lem:beta_mixing_preservation}
If the original time series $\{X_t\}$ is $\beta$-mixing (a common measure of temporal dependency), then the resulting token sequence $\{T_m\}$ is also $\beta$-mixing.
\end{lemma}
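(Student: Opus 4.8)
The plan is to use two ingredients: (i) each token $T_m$ is a deterministic, Borel-measurable function of a \emph{bounded} window of the raw series, and (ii) the $\beta$-mixing coefficient is monotone under coarsening of the two $\sigma$-algebras it compares (a total-variation data-processing inequality). Concretely, I would first write the $m$-th patch as $h_m = (X_{(m-1)S+1},\dots,X_{(m-1)S+P})$; since the quantizer $Q_{\mathcal{C}}$ is piecewise constant on the finitely many Voronoi cells of the codebook $\mathcal{C}$ it is measurable, so $T_m = Q_{\mathcal{C}}(h_m)$ is a measurable function of the block $(X_{(m-1)S+1},\dots,X_{(m-1)S+P})$. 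This gives the inclusions $\sigma(T_1,\dots,T_k)\subseteq\sigma(X_t: t\le (k-1)S+P)$ and $\sigma(T_{k+n},T_{k+n+1},\dots)\subseteq\sigma(X_t: t\ge (k+n-1)S+1)$, and the raw-time gap separating these two index ranges is $g(n):=(k+n-1)S+1-((k-1)S+P)=nS-P+1$, which is independent of $k$, is at least $1$ once $n\ge\lceil P/S\rceil$, and diverges as $n\to\infty$.

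Next I would recall the variational form $\beta_T(n)=\sup_k\beta\big(\sigma(T_{\le k}),\sigma(T_{\ge k+n})\big)$, where $\beta(\mathcal{A},\mathcal{B})$ is the total-variation distance between the joint law of $(\mathcal{A},\mathcal{B})$ and the product of its marginals, and use that this distance cannot increase when $\mathcal{A}$ and $\mathcal{B}$ are replaced by sub-$\sigma$-algebras (push the joint law through the projection onto the coarser pair and apply the data-processing inequality for total variation). Combining this with the inclusions above and $g(n)=nS-P+1$ yields
\[ \beta_T(n)\ \le\ \beta_X\big(nS-P+1\big)\qquad\text{for all } n\ge\lceil P/S\rceil. \]
Since $\{X_t\}$ is $\beta$-mixing, $\beta_X(m)\to 0$ as $m\to\infty$, hence $\beta_T(n)\to 0$, i.e.\ $\{T_m\}$ is $\beta$-mixing; moreover a polynomial or exponential decay rate of $\beta_X$ transfers to $\beta_T$ after the linear reindexing $n\mapsto nS-P+1$, which is the form the downstream generalization bound for dependent sequences actually needs. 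The finitely many small lags $n<\lceil P/S\rceil$, which can occur only when patches overlap ($S<P$), are covered by the vacuous bound $\beta_T(n)\le 1$ and are irrelevant to the limit.

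The step I expect to require the most care is the coarsening inequality: it must be invoked in the precise form valid for $\beta$-mixing, which hinges on the characterization of $\beta$ as a total variation of joint versus product of marginals and is \emph{not} automatic for arbitrary dependence coefficients; and the index bookkeeping must be set up so that the translation-invariance of $g(n)$ in $k$ is genuine and the $\sup_k$ in the definition passes cleanly through the coarsening. A minor secondary point is fixing the one-sided-versus-two-sided index convention and the off-by-one in the definition of $\beta_X$ so that the term "$\beta_X(nS-P+1)$" is unambiguous; none of this affects the conclusion.
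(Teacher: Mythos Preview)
Your proposal is correct and follows essentially the same route as the paper: the $\sigma$-algebra inclusions from the deterministic patch-to-token map, the identical gap computation $nS-P+1$, and the monotonicity of the $\beta$-coefficient under passage to sub-$\sigma$-algebras. If anything you are slightly more careful than the paper, which states the explicit bound only under the non-overlapping assumption $S\ge P$ and relegates the overlapping case to a remark, whereas you cover the finitely many small lags $n<\lceil P/S\rceil$ via the trivial bound $\beta_T(n)\le 1$; you also correctly frame the coarsening step as a total-variation data-processing inequality, which is the precise form needed for $\beta$-mixing.
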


\noindent This preservation of dependency structure allows us to apply generalization bounds for non-i.i.d. sequences.

\begin{theorem}[Dependence Generalisation Bound]
\label{theorem:depend_gen_bound}
For a learning algorithm with uniform stability $\varepsilon_{\text{stab}}$ trained on a $\beta$-mixing token sequence of length $n$, the generalization error is bounded with high probability: $G_n(A) \leq 2\varepsilon_{\text{stab}} + O(\frac{1}{\sqrt{n}})$.
\end{theorem}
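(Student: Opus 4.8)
The plan is to reduce the dependent-data bound to the well-understood i.i.d.\ stability analysis by combining the $\beta$-mixing preservation of Lemma~\ref{lem:beta_mixing_preservation} with the classical independent-blocks decomposition of Yu and a bounded-differences concentration argument (in the spirit of stability bounds for $\beta$-mixing processes). Write $S=(T_1,\dots,T_n)$ for the token sequence, which by Lemma~\ref{lem:beta_mixing_preservation} is $\beta$-mixing with coefficients $\beta(k)$ inherited (up to the shift induced by the patch length $P$) from those of $\{X_t\}$, and which is stationary because patching followed by quantization is a fixed sliding-block factor of the stationary source. Let $\Phi(S)=R(A_S)-\hat R_S(A_S)$ be the generalization gap, so that $G_n(A)$ is a one-sided version of $\Phi(S)$; the target is $\Phi(S)\le 2\varepsilon_{\text{stab}}+O(1/\sqrt n)$ with probability at least $1-\delta$.

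First I would partition $\{1,\dots,n\}$ into $2\mu$ consecutive blocks of length $a$ (with $n\approx 2\mu a$), retain the $\mu$ odd-indexed blocks, and invoke the standard coupling lemma: the law of the retained sub-sequence $S^{\mathrm{odd}}$ differs in total variation by at most $(\mu-1)\beta(a)$ from the law of a sequence $\tilde S^{\mathrm{odd}}$ of $\mu$ \emph{independent} blocks, each distributed as a length-$a$ window of the stationary process. Hence any high-probability or in-expectation bound proved for $\tilde S^{\mathrm{odd}}$ transfers to $S^{\mathrm{odd}}$ up to an $O(\mu\beta(a))$ term (entering either the bound, since $\Phi$ is bounded by $M=\|\ell\|_\infty$, or the confidence level), and the same reduction applies to the even blocks; so it suffices to control $\Phi$ on a sequence of i.i.d.\ blocks.

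On $\tilde S^{\mathrm{odd}}$ two ingredients then apply. (i) \emph{Expectation bound:} because the blocks are now independent, the usual replace-one argument of uniform stability goes through—swapping one block perturbs the trained hypothesis, and telescoping the per-coordinate stability over the $a$ points of a block bounds $\mathbb{E}[\Phi(\tilde S^{\mathrm{odd}})]$ by a constant multiple of $\varepsilon_{\text{stab}}$, which is the source of the factor $2$ in the statement. (ii) \emph{Concentration:} viewing $\Phi$ as a function of $\mu$ independent block-valued arguments, uniform stability together with boundedness of the loss gives bounded-difference constants $c_i=O(a\,\varepsilon_{\text{stab}}+M/n)$ for changing one block, so McDiarmid's inequality yields
\[
\Phi(\tilde S^{\mathrm{odd}})\le \mathbb{E}\big[\Phi(\tilde S^{\mathrm{odd}})\big]+O\!\Big((a\,\varepsilon_{\text{stab}}+M/n)\sqrt{\mu\log(1/\delta)}\Big)
\]
with probability at least $1-\delta$. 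Transferring back through the coupling and combining the two halves gives, with probability at least $1-\delta$, a bound of the form $\Phi(S)\le 2\varepsilon_{\text{stab}}+O\big((a\,\varepsilon_{\text{stab}}+M/n)\sqrt{\mu\log(1/\delta)}\big)+O(\mu\beta(a))$.

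The last step is to choose the block length $a$, and this is the main obstacle: $a$ must be large enough that the coupling error $\mu\,\beta(a)$ is negligible, yet small enough that the concentration term stays $O(1/\sqrt n)$, so one must exhibit a regime where both hold simultaneously. For geometrically $\beta$-mixing token sequences, $a=\Theta(\log n)$ makes $\mu\beta(a)=o(n^{-1/2})$ while keeping $\sqrt{\mu}=O(\sqrt{n/\log n})$; for algebraically mixing sequences with $\beta(a)=O(a^{-r})$, $r>1$, a polynomial choice $a=\Theta(n^{1/(r+1)})$ works. In either case, under the standard scaling $\varepsilon_{\text{stab}}=O(1/n)$ for regularized learners, the error term collapses to $O(\sqrt{\log(1/\delta)/n})$, i.e.\ $O(1/\sqrt n)$ up to logarithmic factors, and absorbing these into the $O(\cdot)$ yields $G_n(A)\le 2\varepsilon_{\text{stab}}+O(1/\sqrt n)$. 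A secondary subtlety worth flagging is that, unlike the i.i.d.\ case, even the expectation bound in (i) cannot use the naive symmetrization ``relabel $T_i$'', since $T_i$ is not independent of the rest of $S$; this is precisely why the block decomposition must be invoked \emph{before}, not after, applying stability.
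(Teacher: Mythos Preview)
Your proposal is correct and takes a genuinely different route from the paper's argument. The paper does \emph{not} invoke Yu's independent-blocks coupling at all: it simply splits $G_n(h)=\big(R(h)-\mathbb{E}[R_{\mathrm{emp}}(h)]\big)+\big(\mathbb{E}[R_{\mathrm{emp}}(h)]-R_{\mathrm{emp}}(h)\big)$, bounds the first term by $2\varepsilon_{\text{stab}}$ via the classical ghost-sample symmetrization for uniformly stable algorithms, and bounds the second term by invoking a black-box Bernstein-type concentration inequality for $\beta$-mixing sequences, obtaining $\sqrt{2\sigma^2(1+4B)\ln(2/\delta)/n}$ with $B=\sum_{k\ge 1}\beta(k)$.

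What each approach buys: the paper's route is short and yields an explicit constant depending on the summability assumption $\sum_k\beta(k)<\infty$, but it treats the concentration of $\frac{1}{n}\sum_i\ell(A(T_{1:n}),Z_i)$ as if the summands were functions of $Z_i$ alone, whereas the trained hypothesis depends on the entire sample---your bounded-differences-via-stability argument addresses exactly this point and is in that sense more careful. Your block construction also makes the mixing--concentration trade-off explicit and covers both geometric and algebraic mixing with concrete block lengths, rather than assuming summable coefficients. On the other hand, your final collapse to $O(1/\sqrt{n})$ relies on the extra scaling hypothesis $\varepsilon_{\text{stab}}=O(1/n)$ to kill the $a\varepsilon_{\text{stab}}\sqrt{\mu}$ term; the paper's decomposition keeps the $2\varepsilon_{\text{stab}}$ contribution cleanly separated from the $O(1/\sqrt{n})$ fluctuation and does not need that assumption. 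Your closing remark---that naive symmetrization fails under dependence and the block reduction must precede the stability step---is well taken, and in fact flags a subtlety that the paper's ``classic symmetrization argument'' glosses over.
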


\noindent\textbf{Interpretation:} Together, these results provide a crucial theoretical guarantee. They show that even though time series data is complex and dependent, the process of tokenization is "safe" and does not break the mathematical assumptions needed to prove that the model can generalize from the training set to unseen data. See Appendix \ref{theorem:depend_gen_bound} for detailed proofs.

\subsection{The Information-Theoretic Advantage of Patching}

Finally, we address \textit{why} patching is not just a valid representation, but an advantageous one. Using the Information Bottleneck principle, we show that patching acts as an effective denoising mechanism.

\begin{theorem}[Patch Representation as an Effective Information Bottleneck]
\label{theorem:info_bottleneck}
Patching and quantization transform the input $X$ into a compressed representation $Z_{\text{patch}}$. This process acts as an information bottleneck that preferentially discards noise (reducing the compression cost $I(X; Z_{\text{patch}})$) while preserving task-relevant information (maintaining the predictive power $I(Y; Z_{\text{patch}})$).
\end{theorem}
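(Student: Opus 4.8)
The plan is to formalize the Information Bottleneck claim as a monotonicity statement about the two mutual information terms under the composed map $X \mapsto Z_{\text{patch}}$, and to exhibit the denoising effect by decomposing $X$ into a task-relevant signal component and a task-irrelevant noise component. First I would set up a generative model: write each patch as $h = g + \xi$ where $g$ lies in (or near) a low-dimensional ``motif manifold'' carrying the label information $Y$, and $\xi$ is zero-mean noise independent of $Y$ conditioned on $g$. The patching operation is an averaging/segmentation map and the quantization $Q_{\mathcal{C}}$ is the nearest-centroid assignment from Theorem~\ref{theorem:eps-covdering}; composing them gives $Z_{\text{patch}} = Q_{\mathcal{C}}(\text{patch}(X))$. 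Because $Z_{\text{patch}}$ is a (deterministic, measurable) function of $X$, the data-processing inequality immediately yields $I(X; Z_{\text{patch}}) \le H(Z_{\text{patch}}) \le \log K$, so the compression cost is bounded by the finite codebook entropy — this is the ``reducing $I(X; Z_{\text{patch}})$'' half, and it is essentially free once Theorem~\ref{theorem:eps-covdering} is invoked to guarantee $K$ is finite.

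Next I would argue the ``preserving $I(Y; Z_{\text{patch}})$'' half. The key step is to show that averaging over a patch of length $P$ suppresses the noise variance by a factor $\Theta(1/P)$ (for weakly correlated $\xi$) while leaving the signal component $g$ essentially intact, so that after $\varepsilon$-quantization with $\varepsilon$ chosen on the scale of the inter-motif separation, the centroid index recovers the motif label with high probability. Concretely I would bound $I(Y;X) - I(Y; Z_{\text{patch}})$ by a term controlled by $\Pr[\,Q_{\mathcal{C}}(\text{patch}(X)) \text{ lands in the wrong motif cell}\,]$ via Fano's inequality (or the continuity of mutual information in total variation), and then bound that misclassification probability by a concentration inequality (Chebyshev or a sub-Gaussian tail) on the averaged noise, giving a bound that decays in $P$. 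Combining the two halves gives the desired statement: $I(X; Z_{\text{patch}})$ is capped at $\log K$ regardless of the ambient dimension of $X$, whereas $I(Y; Z_{\text{patch}}) \ge I(Y;X) - o(1)$ as the patch/quantization scales are matched to the motif geometry.

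The main obstacle I anticipate is making ``preferentially discards noise while preserving task-relevant information'' into a precise, non-vacuous inequality rather than an informal slogan: the DPI alone only gives $I(Y; Z_{\text{patch}}) \le I(Y;X)$, i.e.\ it guarantees no gain, so the real content is the \emph{lower} bound showing the loss is small, and that requires genuine assumptions about the separation between motifs, the noise correlation structure, and the relationship between $\varepsilon$, $P$, and $K$. I would therefore state the theorem under an explicit assumption — e.g.\ a ``motif margin'' condition that the $Y$-conditional patch means are separated by at least $c$ in the patch metric while the averaged-noise fluctuation is $o(c)$ — and phrase the conclusion as a pair of inequalities (an upper bound $I(X;Z_{\text{patch}}) \le \log K$ and a lower bound $I(Y;Z_{\text{patch}}) \ge I(Y;X) - \delta(P,\varepsilon)$ with $\delta \to 0$). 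A secondary subtlety is that K-means centroids are not given a priori to respect the motif partition; I would sidestep this by invoking Theorem~\ref{theorem:eps-covdering} to \emph{choose} a codebook that is an $\varepsilon$-net, for which the cell-alignment argument goes through, and remark that the empirically observed Zipfian/clustered structure (Theorem~\ref{theorem:zipf-like}) indicates the learned codebook behaves similarly. The remaining calculations — the variance-reduction bound for patch averaging and the Fano/continuity estimate — are routine and I would defer them to the appendix.
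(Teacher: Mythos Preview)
Your proposal is sound and self-aware about the key difficulty (turning the slogan into a non-vacuous lower bound on $I(Y;Z_{\text{patch}})$), but it takes a genuinely different route from the paper. The paper formalizes the claim \emph{comparatively} via the IB Lagrangian $\mathcal{L}_{IB}(Z)=I(X;Z)-\beta I(Y;Z)$: under a signal-plus-noise decomposition $X=S+N$, a weak-noise condition $I(N;Y)\le \epsilon\, I(S;Y)$, and an \emph{assumed} denoising-ratio condition $H(N\mid Z_{\text{patch}})/H(N) > H(S\mid Z_{\text{patch}})/H(S)$, it shows directly that $\mathcal{L}_{IB}(Z_{\text{patch}})<\mathcal{L}_{IB}(Z_{\text{point}})$ for a range of $\beta$ by computing $\Delta\mathcal{L}$ in terms of the retention fractions $\alpha_S,\alpha_N$. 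The preservation half is handled separately (Theorem~\ref{theorem:mi_preservation_lipschitz}) by a Lipschitz-continuity chain --- bounded quantization error $\Rightarrow$ bounded logit perturbation $\Rightarrow$ bounded TV distance $\Rightarrow$ bounded entropy change --- yielding $I(Y;Z_\varepsilon)\ge I(Y;Z_{\text{pt}})-C\varepsilon$. By contrast, you give \emph{absolute} bounds: the compression side is the trivial cap $I(X;Z_{\text{patch}})\le \log K$, and the preservation side goes through a motif-margin assumption plus concentration-then-Fano, which \emph{derives} the denoising effect from primitive geometric hypotheses rather than assuming it as the paper does. Your approach is more explanatory about the mechanism (and correctly identifies that the real content is a lower bound, not the DPI), at the cost of stronger structural assumptions on motif separation; the paper's approach is more modular but its Condition~3 is close to assuming what one wants to show. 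One minor wording issue: patching here is segmentation into $\mathbb{R}^P$, not literal averaging, so your $\Theta(1/P)$ variance reduction is really a statement about the projection of i.i.d.\ noise onto the inter-centroid direction (SNR $\propto \sqrt{P}$ for coherent signals), which you should say explicitly rather than calling it ``averaging over a patch.''
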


\noindent\textbf{Interpretation:} This theorem provides the fundamental justification for the robustness of patch-based models. Patching is not merely a segmentation technique; it is an intelligent form of information compression. By averaging out local variations and focusing on prototypical shapes, it naturally filters out high-frequency, task-irrelevant noise, leading to a cleaner signal for the downstream model and explaining the success of cross-domain transfer. A formal treatment is available in Appendix \ref{sec:info_bottleneck}.

\section{Conclusion}
\label{sec:conclusion}

This paper resolves the paradox of why time series foundation models transfer so well across different domains. We propose that these models function like large language models, learning a universal language of "temporal motifs" by representing time series patches as "distributional tokens." We provide strong empirical and theoretical evidence for this "language of time" hypothesis. Empirically, we demonstrate for the first time that time series patches adhere to Zipf's Law, a statistical signature of language, and uncover their compositional grammar. Theoretically, we build a complete analytical framework to validate the model's representation, information compression, and generalization capabilities. Our work provides the first rigorous explanation for the success of time series foundation models and paves the way for building safer and more powerful temporal models.

\bibliography{iclr2025_conference}
\bibliographystyle{iclr2025_conference}

\appendix
\section{Theoretical Analysis}
\begin{table}[htbp]
\centering
\footnotesize
\caption{Summary of the Theoretical Chain Supporting Patch Quantization}
\label{tab:theory-summary-revised}
\renewcommand{\arraystretch}{1.5} 
\begin{tabular}{p{2.4cm}| p{3.0cm}| p{5.1cm}| p{3.8cm}}
\toprule
\textbf{Objective} & \textbf{Key Result} & \textbf{Core Conclusion} & \textbf{Interpretation} \\
\midrule

Finite dictionary approximation & 
\textbf{Thm.~\ref{theorem:eps-covdering}} \newline ($\varepsilon$ - Covering) & 
A finite codebook exists, guaranteeing the quantization error for any patch is strictly bounded by $\le\!\varepsilon$. & 
Dense enough tokens faithfully represent continuous patches. \\
\midrule

Zipf frequency property & 
\textbf{Thm.~\ref{theorem:zipf-like}} \newline (Zipf-like distribution) & 
Assuming a GEM process for token generation, the resulting rank--frequency relationship follows a power-law ($f(r)\!\propto\! r^{-\beta}$). & 
Patch ``language'' has a natural-language-style long tail. \\
\midrule

Capacity non-decreasing & 
\textbf{Lem.~\ref{lemma:subclass}} \newline (Subclass Relation) 
 \textbf{Thm.~\ref{theorem:monotonicity}} \newline (Capacity Monotonicity) & 
1. The pointwise hypothesis space is a subset of the patch-based one ($\mathcal{H}_{\text{point}}\!\subseteq\!\mathcal{H}_{\text{patch}}$). \newline 
2. This implies the capacity (VC dim./Rademacher complexity) of the patch representation is never lower. & 
Patch representations cannot restrict expressiveness---only enlarge it. \\
\midrule

Optimal ERM risk non-increasing & 
\textbf{Cor.~\ref{cor:capacity_mono}} \newline (Optimal-ERM Risk Non-Increase) \newline
(A direct result of Lem.~\ref{lemma:subclass}) & 
Because $\mathcal{H}_{\text{point}}\!\subseteq\!\mathcal{H}_{\text{patch}}$, the minimum achievable training error (ERM) in the patch space is no greater than in the pointwise space. & 
The best training loss with patches is no worse than pointwise. \\
\midrule

Dependence \& generalisation & 
\textbf{Lem.~\ref{lem:beta_mixing_preservation}} \newline ($\beta$ - Mixing Preservation)  \newline \textbf{Thm.~\ref{theorem:depend_gen_bound}} \newline(Dependence Gen. Bound)& 
1. Tokenization preserves the exponential $\beta$-mixing property of the original sequence. \newline 
2. This allows deriving stability-based generalisation bounds similar to the IID case. & 
Tokenisation keeps dependence assumptions, so generalisation theory still works. \\
\midrule

Information-bottleneck advantage & 
 \textbf{Thm.~\ref{theorem:info_bottleneck}} \newline (Information Bottleneck) \newline
 \textbf{Thm.~\ref{theorem:mi_preservation_lipschitz}} \newline ($\epsilon$ - MI Preservation) & 
1. Patching acts as a denoising bottleneck by compressing the input. \newline 
2. The loss of task-relevant mutual information from quantization is controllably small, bounded by $O(\varepsilon)$. & 
Patches act as a denoising bottleneck---simpler inputs, task info retained. \\
\bottomrule
\end{tabular}
\end{table}

\subsection{$\epsilon$ - Statistical Sufficiency and Zipf-like Long-Tail Frequency of Patch Tokens}

\subsubsection{$\varepsilon$ - Statistical Sufficiency}

\begin{theorem}[$\varepsilon$ - Covering Guarantees Bounded Information Loss] \label{theorem:eps-covdering}
Let $0 < \varepsilon < 2\sqrt{P}$, where $P$ denotes the dimension of tokens. There exists a prototype set (or codebook) $\mathcal{C} \subset \mathcal{H}^P$ with a size $K$ bounded by
\begin{equation}
    K \leq \left(1 + \frac{2\sqrt{P}}{\varepsilon}\right)^{P-1},
\end{equation}
such that for any patch vector $h \in \mathcal{H}^P$, its quantized representation $Q_{\mathcal{C}}(h) = \arg\min_{c \in \mathcal{C}} d(h, c)$ satisfies
\begin{equation}
    \forall h \in \mathcal{H}^P, \quad d\big(h, Q_{\mathcal{C}}(h)\big) \leq \varepsilon.
\end{equation}
Therefore, the discrete token $T = Q_{\mathcal{C}}(H)$ represents $H$ with a guaranteed maximum error of $\varepsilon$. We can consider $T$ an \textbf{$\varepsilon$-sufficient representation} in the sense that the information loss, as measured by the metric $d$, is bounded.
\end{theorem}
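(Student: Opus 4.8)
\emph{Proof plan.} The statement is a covering-number estimate for the space of normalized patches, and the plan is to supply the codebook $\mathcal{C}$ by the classical volumetric construction of an $\varepsilon$-net.

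First I would pin down the geometry of $\mathcal{H}^P$. Because the patching pipeline normalizes each segment (subtracting its mean and rescaling by its standard deviation), every $h \in \mathcal{H}^P$ satisfies $\mathbf{1}^\top h = 0$ and $\tfrac1P\|h\|_2^2 \le 1$. The first identity confines $h$ to the hyperplane $V = \{x\in\mathbb{R}^P : \mathbf{1}^\top x = 0\}$, a linear subspace that is isometric to $\mathbb{R}^{P-1}$; the second gives $\|h\|_2 \le \sqrt P$. Hence $\mathcal{H}^P$ is contained in $B_V(0,\sqrt P)$, the Euclidean ball of radius $\sqrt P$ inside a space of dimension $P-1$. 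The hypothesis $\varepsilon < 2\sqrt P$ merely isolates the nontrivial regime: $2\sqrt P$ is the diameter of this ball, so for larger $\varepsilon$ a single prototype already covers everything.

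Next I would take $\mathcal{C} = \{c_1,\dots,c_K\}$ to be a maximal $\varepsilon$-separated subset of $\mathcal{H}^P$ --- a set whose points are pairwise at distance strictly greater than $\varepsilon$ and to which no further point of $\mathcal{H}^P$ can be appended. Such a set exists and is finite precisely because $\mathcal{H}^P$ is bounded. Maximality immediately gives the covering property: for any $h\in\mathcal{H}^P$ there is some $c_j$ with $d(h,c_j)\le\varepsilon$, since otherwise $h$ itself could be added; as $Q_{\mathcal{C}}(h)$ is by definition the nearest prototype, $d(h,Q_{\mathcal{C}}(h))\le\varepsilon$. To control $K$ I would run the standard packing argument inside $V$: the balls $B_V(c_j,\varepsilon/2)$ are pairwise disjoint (by the separation) and all lie inside $B_V(0,\sqrt P+\varepsilon/2)$, so comparing $(P-1)$-dimensional volumes, which scale as the $(P-1)$-st power of the radius, gives
\begin{equation}
K\left(\frac{\varepsilon}{2}\right)^{P-1} \;\le\; \left(\sqrt P+\frac{\varepsilon}{2}\right)^{P-1},
\end{equation}
and this rearranges to $K \le \bigl(1+\tfrac{2\sqrt P}{\varepsilon}\bigr)^{P-1}$, the stated bound. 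Setting $T = Q_{\mathcal{C}}(H)$ then makes $d(H,T)\le\varepsilon$ hold uniformly, which is exactly the asserted $\varepsilon$-sufficiency.

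The only step I expect to require genuine care is the dimension bookkeeping --- justifying the exponent $P-1$ rather than $P$. This hinges entirely on the mean-zero constraint, a true affine condition that eliminates one degree of freedom, so that the effective ambient space is $V$ (dimension $P-1$) rather than $\mathbb{R}^P$; the boundedness $\|h\|\le\sqrt P$ is then immediate from the unit-variance normalization, and the $\varepsilon$-net bound itself is textbook. It is worth stating explicitly that the argument uses only that $\mathcal{H}^P$ lies in a bounded region of a $(P-1)$-dimensional normed space, so the conclusion is insensitive to the precise normalization convention (for instance, dividing the variance by $P-1$ instead of $P$ only rescales the radius).
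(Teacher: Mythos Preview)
Your proposal is correct and reaches the stated bound by the same underlying mechanism as the paper --- a volumetric covering-number estimate in a $(P-1)$-dimensional space --- but with two small differences worth flagging. First, the paper models $\mathcal{H}^P$ as the \emph{sphere} $\{h:\mathbf{1}^\top h=0,\ \|h\|_2=\sqrt P\}$ (norm exactly $\sqrt P$, hence a copy of $S^{P-2}$), rescales to the unit sphere, and then invokes the standard sphere covering bound $N(\varepsilon',S^{d-1})\le(1+2/\varepsilon')^d$ with $d=P-1$ as a black box; you instead take $\mathcal{H}^P$ to be the \emph{ball} $\{h:\mathbf{1}^\top h=0,\ \|h\|_2\le\sqrt P\}$ and run the maximal-packing/volume comparison explicitly. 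Your version is marginally more self-contained (no deferred citation) and marginally more general (the ball contains the sphere), while the paper's sphere model matches z-score normalization more literally. Mathematically the two are interchangeable --- the sphere bound the paper quotes is itself proved by exactly the packing argument you give --- so there is no substantive divergence.
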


\begin{proof}
\textbf{Embedding into the Unit Sphere.}
We assume the patch space $\mathcal{H}^P$ is constituted by vectors $h \in \mathbb{R}^P$ that have been centered (i.e., $\sum_i h_i = 0$) and normalized such that their Euclidean norm is constant, $\|h\|_2 = \sqrt{P}$. The centering constraint ensures that $\mathcal{H}^P$ lies within a $(P-1)$-dimensional linear subspace. The normalization constraint means that all patch vectors lie on the surface of a hypersphere of radius $\sqrt{P}$ in that subspace, which is topologically equivalent to $S^{P-2}$.

To apply standard covering number results, we map this space to the unit sphere $S^{P-2}$ via the scaling transformation $h \mapsto h' = h/\sqrt{P}$. This transformation is bi-Lipschitz. Specifically, for any two points $h_1, h_2 \in \mathcal{H}^P$, the distance in the new space is $d(h'_1, h'_2) = d(h_1, h_2) / \sqrt{P}$. To guarantee a distance of at most $\varepsilon$ in the original space, we require a covering of precision $\varepsilon' = \varepsilon/\sqrt{P}$ in the unit sphere space.

\noindent\textbf{Covering Number of the Sphere.}
A well-known result from geometric functional analysis states that the size of a minimal $\varepsilon'$-net for the $d$-dimensional unit sphere, $N(\varepsilon', S^{d-1})$, is bounded by:
\begin{equation}
    N(\varepsilon', S^{d-1}) \leq \left(1 + \frac{2}{\varepsilon'}\right)^d.
\end{equation}
In our case, the effective dimension is $d = P-1$. Substituting $d=P-1$ and the required precision $\varepsilon' = \varepsilon/\sqrt{P}$, we obtain the bound on our codebook size $K$:
\begin{equation}
    K = N(\varepsilon/\sqrt{P}, S^{P-2}) \leq \left(1 + \frac{2}{\varepsilon/\sqrt{P}}\right)^{P-1} = \left(1 + \frac{2\sqrt{P}}{\varepsilon}\right)^{P-1}.
\end{equation}
This bound guarantees the existence of such a codebook $\mathcal{C}$. This completes the proof. Q.E.D.
\end{proof}

\begin{remark}[On the Practical Construction of the Codebook]
The proof above guarantees the \textbf{existence} of a suitable codebook. In practice, it can be constructed through various means. Geometrically, one could form a lattice in the $(P-1)$-dimensional subspace and project the nodes onto the sphere. Algorithmically, methods like k-means++ or Lloyd-Max, when run on a representative dataset of patches, can produce a codebook $\mathcal{C}$ that empirically satisfies the $d(h, Q_{\mathcal{C}}(h)) \leq \varepsilon$ condition for all data points.
\end{remark}

\noindent\textbf{Discussion and Corollaries}

The theoretical results yield three key implications:

\noindent\textbf{Finite Dictionary Size.}
The covering number provides an upper bound on the required dictionary size. For a small $\varepsilon$, the bound has the asymptotic behavior:
\begin{equation}
    K(\varepsilon) = \mathcal{O}\left(\left(\frac{\sqrt{P}}{\varepsilon}\right)^{P-1}\right) = \mathcal{O}(\varepsilon^{-(P-1)}).
\end{equation}
This shows that the number of required prototypes grows polynomially as a function of $1/\varepsilon$, with the degree of the polynomial determined by the intrinsic dimension of the data, $P-1$.

\noindent\textbf{Bounded Quantization Error.}
The lemma guarantees that for any vector $h$, the quantization error is bounded: $d(h, Q_{\mathcal{C}}(h)) \leq \varepsilon$. This directly implies that the expected mean squared distortion $D = \mathbb{E}[d(h, Q_{\mathcal{C}}(h))^2]$ is also strictly bounded:
\begin{equation}
    D \leq \varepsilon^2.
\end{equation}
This provides a direct and robust link between the covering precision $\varepsilon$ and the expected quantization error.

\noindent\textbf{Bounded Information Loss in Downstream Tasks.}
The lemma guarantees that quantizing a patch vector $H$ into a discrete token $T$ introduces an error that is strictly bounded by $\varepsilon$. Consequently, any downstream model that uses $T$ instead of $H$ operates with a precisely controlled level of input perturbation. For models or tasks that are robust to small input variations, this ensures that the tokenized representation $T$ preserves sufficient information to act as a reliable and efficient proxy for the original continuous data.

Intuitively, Lemma \ref{theorem:eps-covdering} shows that a limited, discrete, and controllable prototype set can approximate arbitrary real patches in time series.

\subsubsection{Zipf-like Long-Tail Frequency}

\begin{theorem}[Zipf-like Long-Tail Distribution for Patch Tokens]\label{theorem:zipf-like}
Assume the probability distribution of tokens $\pi$ follows a two-parameter GEM (Griffiths-Engen-McCloskey) distribution, denoted $\pi \sim \text{GEM}(d, \theta)$, with parameters satisfying $0 \leq d < 1$ and $\theta > -d$. For an i.i.d. sequence of tokens $T_1, T_2, \dots \sim \pi$, the expected value of its ranked empirical frequency $f_n(r)$ (i.e., the frequency of the r-th most common token) satisfies a power-law relationship:
$$\mathbb{E}[f_n(r)] \asymp r^{-\beta},$$
where the power-law exponent $\beta$ depends on the parameter $d$:
\begin{itemize}
    \item For $0 < d < 1$, the exponent is $\beta = 1/d$.
    \item For $d = 0$ (the Dirichlet Process case), the exponent is $\beta = 2$.
\end{itemize}
\end{theorem}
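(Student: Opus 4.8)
The plan is to build on the stick-breaking construction of $\pi\sim\mathrm{GEM}(d,\theta)$ together with the classical fact that the decreasing rearrangement $\pi_{(1)}\ge\pi_{(2)}\ge\cdots$ of the GEM weights is distributed as the two-parameter Poisson--Dirichlet law $\mathrm{PD}(d,\theta)$. The argument then proceeds in two stages: (i) pass from the empirical ranked frequency $f_n(r)$ to the population ranked mass $\pi_{(r)}$, showing $\E[f_n(r)]\to\E[\pi_{(r)}]$ (and, more uniformly, for $r$ growing with $n$); and (ii) establish the power law $\E[\pi_{(r)}]\asymp r^{-\beta}$ for the $\mathrm{PD}(d,\theta)$ atoms, with the stated exponent. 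Composing the two stages gives $\E[f_n(r)]\asymp r^{-\beta}$.

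\noindent\textbf{Stage (i).}
Conditioning on $\pi$, the sample $T_1,\dots,T_n$ is i.i.d.\ multinomial, so the count of the cell of mass $\pi_{(i)}$ concentrates around $n\pi_{(i)}$ with $O(\sqrt{n\pi_{(i)}})$ fluctuations by Bernstein's inequality. Since the $\pi_{(i)}$ are a.s.\ pairwise distinct, for each fixed $r$ the order statistics of the empirical counts eventually agree with those of the $\pi_{(i)}$, so $f_n(r)\to\pi_{(r)}$ almost surely; as $0\le f_n(r)\le 1$, bounded convergence gives $\E[f_n(r)]\to\E[\pi_{(r)}]$. To allow $r$ to grow with $n$, I would upgrade this to a uniform-in-$r$ bound via a union bound over the $K_n$ occupied cells (the number of distinct sampled tokens) together with a DKW/Bernstein tail, which introduces only an additive $O(\sqrt{(\log K_n)/n})$ error, negligible against $r^{-\beta}$ in the range of interest. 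This reduces the theorem to estimating $\E[\pi_{(r)}]$ under $(\pi_{(r)})_r\sim\mathrm{PD}(d,\theta)$.

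\noindent\textbf{Stage (ii).}
Here I would invoke the subordinator representation. For $0<d<1$, the unnormalised $\mathrm{PD}(d,\theta)$ masses are the ranked jumps $J_{(1)}\ge J_{(2)}\ge\cdots$ of a (possibly polynomially $\theta$-tilted) stable-$d$ subordinator on a unit interval, whose jump-intensity tail satisfies $\Lambda(x):=\nu\big((x,\infty)\big)\asymp x^{-d}$ as $x\downarrow 0$, and $\pi_{(i)}=J_{(i)}/T$ with $T=\sum_i J_{(i)}\in(0,\infty)$ a.s.\ having moments bounded away from $0$ and $\infty$. Let $N(x)=\#\{i:J_{(i)}>x\}$; conditionally, $N(x)$ is Poisson with mean $\Lambda(x)$, hence sharply concentrated, and $\{J_{(r)}>x\}=\{N(x)\ge r\}$. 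Therefore
\[
\E\big[J_{(r)}\big]=\int_0^{\infty}\mathbb{P}\big(N(x)\ge r\big)\,dx\;\asymp\;\Lambda^{-1}(r)\;\asymp\;r^{-1/d},
\]
since $\mathbb{P}(N(x)\ge r)$ drops from near $1$ to near $0$ as $x$ crosses the level $\Lambda^{-1}(r)$; dividing by $T$ and controlling its mild dependence on the leading jumps yields $\E[\pi_{(r)}]\asymp r^{-1/d}$, i.e.\ $\beta=1/d$. For the boundary case $d=0$ (the Dirichlet process) the same level-crossing scheme applies with the Gamma subordinator replacing the stable one, and the exponent $\beta=2$ is recovered by combining the corresponding mean-measure estimate with the Ewens frequency-spectrum asymptotics $\E[M_j]\to\theta/j$ (with $M_j$ the number of tokens observed exactly $j$ times).

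\noindent\textbf{Main obstacle.}
I expect the two delicate points to be: (a) the uniform-in-$r$ concentration in Stage (i), since the matching between empirical and population ranks deteriorates for tokens of mass $\asymp 1/n$, so the admissible range of $r$ must be tied carefully to $n$ and $K_n$; and (b) the $d=0$ case, which is genuinely a boundary phenomenon---one cannot simply let $d\to 0$ in $\beta=1/d$---and which also requires checking the uniform integrability needed to convert the almost-sure asymptotics of the ranked atoms into asymptotics of their expectations.
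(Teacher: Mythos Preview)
Your proposal follows essentially the same route as the paper: identify the ranked GEM weights with $\mathrm{PD}(d,\theta)$, invoke the Pitman--Yor power-law asymptotics $\E[\pi_{(r)}]\asymp r^{-1/d}$ (and the Ewens formula for $d=0$), and then pass from $\pi_{(r)}$ to the empirical $f_n(r)$ via almost-sure convergence. The only difference is granularity: the paper simply cites the Pitman--Yor result and the exchangeable SLLN, whereas you sketch the subordinator/level-crossing derivation for Stage~(ii) and a Bernstein-based concentration for Stage~(i), and you are rightly more cautious than the paper about the uniform-in-$r$ regime and the boundary case $d=0$.
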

\textit{Note: The symbol $\asymp$ denotes asymptotic equivalence, i.e., $\lim_{r\to\infty} \frac{\mathbb{E}[f_n(r)]}{r^{-\beta}} = C$ for some positive constant $C$.}

\begin{proof}
We establish this result through the connection between the GEM distribution and the Pitman-Yor process, which provides a framework for analyzing the ranked probabilities.

\noindent\textbf{Connection to Pitman-Yor and Poisson-Dirichlet Distributions:}
The probability distribution $\pi = (\pi_1, \pi_2, \dots)$ generated by the $\text{GEM}(d, \theta)$ process is equivalent to the distribution of weights in a Pitman-Yor process, denoted $\text{PY}(d, \theta)$. The set of ranked probabilities $\{\pi_{(1)}, \pi_{(2)}, \dots\}$ of this process follows the two-parameter Poisson-Dirichlet distribution, $\text{PD}(d, \theta)$. The asymptotic behavior of these ranked probabilities is well-studied.

\noindent\textbf{Asymptotic Analysis for $d > 0$:}
The cornerstone result for the Pitman-Yor process, established in the work of Pitman and Yor, shows that for $0 < d < 1$, the expected ranked probabilities follow a power law. For large $r$, this is given by:
\begin{equation}
    \mathbb{E}[\pi_{(r)}] \asymp r^{-1/d}.
\end{equation}

\noindent\textbf{Asymptotic Analysis for $d=0$ (Dirichlet Process):}
When $d=0$, the process degenerates to the Dirichlet Process, and the ranked probabilities follow the $\text{PD}(0, \theta)$ distribution. In this case, the asymptotic behavior changes. The expected ranked probabilities exhibit a different power-law decay, given by:
\begin{equation}
    \mathbb{E}[\pi_{(r)}] \asymp r^{-2}.
\end{equation}
This result stems from the analysis of Ewens's sampling formula, which describes the partition structure of the Dirichlet Process.

\noindent\textbf{From Theoretical Probabilities to Empirical Frequencies:}
For a sequence of $n$ i.i.d. samples from the distribution $\pi$, the sequence is exchangeable. By the strong law of large numbers for exchangeable sequences, the empirical frequency of the $r$-th most frequent token, $f_n(r)$, converges to the true ranked probability $\pi_{(r)}$ almost surely as $n \to \infty$.
\begin{equation}
    \lim_{n \to \infty} f_n(r) = \pi_{(r)},
\end{equation}
almost surely holds.

Therefore, for large $n$, the expectation of the empirical frequency is well-approximated by the expectation of the true probability, $\mathbb{E}[f_n(r)] \approx \mathbb{E}[\pi_{(r)}]$. This allows us to apply the asymptotic results for $\mathbb{E}[\pi_{(r)}]$ directly to $\mathbb{E}[f_n(r)]$, establishing the power-law relationships as stated in the lemma. This completes the proof. Q.E.D.
\end{proof}

\begin{remark}[Connection to Zipf's Law in Linguistics]
Zipf's law was first discovered in linguistics, where the power-law exponent $\beta$ is approximately 1. In our model, as the discount parameter $d \to 1^-$, we get $\beta = 1/d \to 1$, which corresponds perfectly to the classic law. Empirical studies have shown that for real-world languages, the value of $d$ is typically between $0.7 \text{--} 0.8$, which leads to $\beta \approx 1.25 \text{--} 1.4$, in high agreement with linguistic observations.
\end{remark}

This section's theoretical analysis provides a rigorous foundation for tokenizing continuous patch data. In short, the two lemmas establish a complete theoretical chain.

First, Lemma \ref{theorem:eps-covdering} proves that any continuous patch can be represented by a token from a finite codebook with a guaranteed, bounded error ($\epsilon$-sufficiency). This confirms the feasibility and fidelity of the tokenization process. Second, Lemma \ref{theorem:zipf-like} demonstrates that if the token generation process follows a GEM distribution, the resulting token frequencies will exhibit a Zipf-like power-law distribution, a key statistical signature of natural language.

Collectively, these results provide a solid theoretical basis for treating continuous signals as a "language," thereby validating the application of powerful sequence models like the Transformer.

\subsection{Non-decreased Representational Capacity and Non-increased Optimal-ERM Risk}

\subsubsection{Monotone Representational Capacity}
\label{sec:monotone_rep}

\begin{definition}[Pointwise Hypothesis Space]
Let $X \subseteq \mathbb{R}^d$ be the input space and $Y$ be the output space. The pointwise hypothesis space is defined as:
\begin{equation}
    \mathcal{H}_{\text{point}} = \{h_\theta: X \to Y \mid h_\theta(x) = g_\theta(x), \theta \in \Theta\}
\end{equation}
where $g_\theta: \mathbb{R}^d \to Y$ is a parameterized function family.
\end{definition}

\begin{definition}[Patch Hypothesis Space]
Given a dictionary $\mathcal{C} = \{c_1, c_2, \ldots, c_K\}$ where each $c_i$ is a patch of length $P$, and a sliding window stride $S$, we define:
\begin{itemize}
    \item Quantization function: $Q_{\mathcal{C}}: \mathbb{R}^d \to \mathcal{C}^*$, which segments the input sequence and maps it to the nearest patches in the dictionary
    \item Embedding function: $e: \mathcal{C} \to \mathbb{R}^{d'}$, which maps patch tokens to the embedding space
    \item Reconstruction function: $\text{Reconstruct}: (\mathbb{R}^{d'})^* \to \mathbb{R}^d$, which reconstructs patch sequences to the original dimension
\end{itemize}

The patch hypothesis space is defined as:
\begin{equation}
\begin{split}
\mathcal{H}_{\text{patch}} = \Big\{ 
& h_{\theta,\mathcal{C}}^{\text{patch}}: X \to Y \,\Big|\, h_{\theta,\mathcal{C}}^{\text{patch}}(x) \\
& = g_\theta\big(\text{Reconstruct}(\text{Embed}(Q_{\mathcal{C}}(x)))\big), \quad \theta \in \Theta 
\Big\}.
\end{split}
\end{equation}
\end{definition}

\begin{assumption}
We assume the following conditions hold:
\begin{enumerate}
    \item $g_\theta$ is a continuous function for all $\theta \in \Theta$
    \item There exists an inverse reconstruction function such that under specific conditions, $\text{Reconstruct}(\text{Embed}(\cdot))$ can be the identity mapping
    \item The parameter space $\Theta$ remains consistent across both hypothesis spaces
\end{enumerate}
\end{assumption}


\begin{lemma}[Idealized Subclass Relation]
\label{lemma:subclass}
Let the dictionary $\mathcal{C}$ contain all length-1 patches, i.e., $\mathcal{C} \supseteq \{(x_i) \mid x_i \in \mathbb{R}, i = 1,\ldots,d\}$, and set the sliding window stride $S = 1$. Under appropriate choices of embedding and reconstruction functions, there exists:
\begin{equation}
    \mathcal{H}_{\text{point}} \subseteq \mathcal{H}_{\text{patch}}
\end{equation}
\end{lemma}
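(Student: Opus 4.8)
The plan is to realise every pointwise hypothesis as a patch hypothesis built on the \emph{same} parameter vector $\theta$, by choosing the free design ingredients — patch length, stride, dictionary, embedding, reconstruction — so that the patch pre-processing pipeline collapses to the identity on $X$. Concretely, I would take the patch length $P=1$, the stride $S=1$, and the dictionary $\mathcal{C}=\{(t):t\in\mathbb{R}\}$ consisting of all length-$1$ patches, which is precisely the regime hypothesised in the lemma.

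The first step is to verify that in this regime the quantiser $Q_{\mathcal{C}}$ is lossless. An input $x=(x_1,\dots,x_d)\in X$ is segmented, since $P=S=1$, into the $d$ singleton patches $(x_1),\dots,(x_d)$; for each coordinate the nearest element of $\mathcal{C}$ is $(x_i)$ itself, at distance $0$, so
\begin{equation}
Q_{\mathcal{C}}(x)=\big((x_1),(x_2),\dots,(x_d)\big),
\end{equation}
which encodes $x$ without loss. The second step is to fix the embedding and reconstruction maps compatibly with this quantiser: invoking the standing assumption that $\mathrm{Reconstruct}\circ\mathrm{Embed}$ may be taken to be the identity, one can e.g. set $d'=1$, $e=\mathrm{id}_{\mathbb{R}}$, and let $\mathrm{Reconstruct}$ concatenate the singleton sequences, which yields $\mathrm{Reconstruct}(\mathrm{Embed}(Q_{\mathcal{C}}(x)))=x$ for every $x\in X$.

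The third step assembles the inclusion. For every $\theta\in\Theta$ the corresponding patch hypothesis then satisfies
\begin{equation}
h_{\theta,\mathcal{C}}^{\text{patch}}(x)=g_\theta\big(\mathrm{Reconstruct}(\mathrm{Embed}(Q_{\mathcal{C}}(x)))\big)=g_\theta(x)=h_\theta(x),
\end{equation}
so $h_\theta$ and $h_{\theta,\mathcal{C}}^{\text{patch}}$ agree as maps $X\to Y$. Since the parameter set $\Theta$ is shared by both families, letting $\theta$ range over $\Theta$ gives $\mathcal{H}_{\text{point}}\subseteq\mathcal{H}_{\text{patch}}$.

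I do not expect a genuine mathematical obstacle — the statement is deliberately ``idealised'' — so the real work is careful bookkeeping: spelling out exactly how $Q_{\mathcal{C}}$ segments and quantises a vector, and insisting that the embedding and reconstruction be chosen \emph{jointly} (so that their composition is an honest inverse of the segmentation) rather than independently. The one subtlety worth flagging is that this idealised dictionary is uncountable, so here $Q_{\mathcal{C}}$ is the exact nearest-prototype map with zero error and the continuity of $g_\theta$ plays no role; for a genuinely finite codebook one would only obtain an $\varepsilon$-approximate inclusion, which is exactly the point at which Theorem~\ref{theorem:eps-covdering} (and the continuity assumption) would re-enter.
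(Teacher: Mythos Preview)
Your proposal is correct and follows essentially the same construction as the paper: take length-$1$ patches with stride $1$, let the embedding be the identity on $\mathbb{R}$, verify that the quantise--embed--reconstruct pipeline reduces to the identity on $x$, and conclude $h_{\theta,\mathcal{C}}^{\text{patch}}=g_\theta=h_\theta$ for every $\theta\in\Theta$. Your write-up is in fact slightly more explicit than the paper's in stating $P=1$ and in flagging the uncountable-dictionary caveat.
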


\begin{proof}
\textbf{Construct the special embedding function.}
For any length-1 patch $c = (a) \in \mathbb{R}$, define the embedding function as:
\begin{equation}
    e(c) = a \in \mathbb{R}
\end{equation}
i.e., the identity embedding.

\noindent \textbf{Verify the identity property of reconstruction.}
When $S = 1$ and all length-1 patches are in the dictionary, for any $x = (x_1, x_2, \ldots, x_d) \in \mathbb{R}^d$:
\begin{itemize}
    \item Quantization process: $Q_{\mathcal{C}}(x) = ((x_1), (x_2), \ldots, (x_d))$
    \item Embedding process: $\text{Embed}(Q_{\mathcal{C}}(x)) = (e((x_1)), e((x_2)), \ldots, e((x_d))) = (x_1, x_2, \ldots, x_d)$
    \item Reconstruction process: $\text{Reconstruct}(\text{Embed}(Q_{\mathcal{C}}(x))) = x$
\end{itemize}

\noindent \textbf{Establish functional equivalence.}
For any $h_\theta^{\text{point}} \in \mathcal{H}_{\text{point}}$, there exists a corresponding $h_{\theta,\mathcal{C}}^{\text{patch}} \in \mathcal{H}_{\text{patch}}$ such that:
\begin{equation}
    h_{\theta,\mathcal{C}}^{\text{patch}}(x) = g_\theta(\text{Reconstruct}(\text{Embed}(Q_{\mathcal{C}}(x)))) = g_\theta(x) = h_\theta^{\text{point}}(x)
\end{equation}

Therefore, $\mathcal{H}_{\text{point}} \subseteq \mathcal{H}_{\text{patch}}$. \qed
\end{proof}

\begin{theorem}[Capacity Measure Monotonicity]
\label{theorem:monotonicity}
Let $\mathcal{H}_1 \subseteq \mathcal{H}_2$ be two hypothesis spaces. Then:
\begin{enumerate}
    \item \textbf{VC Dimension Monotonicity}: $\mathrm{VC}(\mathcal{H}_1) \leq \mathrm{VC}(\mathcal{H}_2)$
    \item \textbf{Empirical Rademacher Complexity Monotonicity}: $\widehat{\mathfrak{R}}_n(\mathcal{H}_1) \leq \widehat{\mathfrak{R}}_n(\mathcal{H}_2)$
\end{enumerate}
\end{theorem}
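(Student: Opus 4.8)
The plan is to prove the two capacity-monotonicity statements directly from the set inclusion $\mathcal{H}_1 \subseteq \mathcal{H}_2$, treating each part separately but with the same underlying logic: any shattering/realization achievable within the smaller class is, trivially, also achievable within the larger one.

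\textbf{Part 1 (VC dimension).} First I would recall the definition: $\mathrm{VC}(\mathcal{H})$ is the size of the largest set $\{x_1,\dots,x_m\} \subseteq X$ that is shattered by $\mathcal{H}$, meaning that for every labeling $(y_1,\dots,y_m) \in \{0,1\}^m$ there exists $h \in \mathcal{H}$ with $h(x_i) = y_i$ for all $i$. Let $d_1 = \mathrm{VC}(\mathcal{H}_1)$ and fix a set $A$ of size $d_1$ shattered by $\mathcal{H}_1$. For any labeling of $A$, the witnessing hypothesis $h \in \mathcal{H}_1$ also lies in $\mathcal{H}_2$ since $\mathcal{H}_1 \subseteq \mathcal{H}_2$; hence $A$ is shattered by $\mathcal{H}_2$ as well, so $\mathrm{VC}(\mathcal{H}_2) \geq d_1 = \mathrm{VC}(\mathcal{H}_1)$. (If $\mathrm{VC}(\mathcal{H}_1) = \infty$, the same argument applied to shattered sets of arbitrary size shows $\mathrm{VC}(\mathcal{H}_2) = \infty$.)

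\textbf{Part 2 (empirical Rademacher complexity).} Recall that for a fixed sample $S = (x_1,\dots,x_n)$ and independent Rademacher variables $\sigma_1,\dots,\sigma_n$,
\begin{equation}
\widehat{\mathfrak{R}}_n(\mathcal{H}) = \mathbb{E}_{\sigma}\left[\sup_{h \in \mathcal{H}} \frac{1}{n}\sum_{i=1}^n \sigma_i h(x_i)\right].
\end{equation}
The key observation is that taking a supremum over a larger set can only increase (or leave unchanged) its value: for every realization of $\sigma$,
\begin{equation}
\sup_{h \in \mathcal{H}_1} \frac{1}{n}\sum_{i=1}^n \sigma_i h(x_i) \leq \sup_{h \in \mathcal{H}_2} \frac{1}{n}\sum_{i=1}^n \sigma_i h(x_i),
\end{equation}
since $\mathcal{H}_1 \subseteq \mathcal{H}_2$ means every competitor in the left-hand supremum is also a competitor in the right-hand one. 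Taking expectations over $\sigma$ and using monotonicity of expectation preserves the inequality, giving $\widehat{\mathfrak{R}}_n(\mathcal{H}_1) \leq \widehat{\mathfrak{R}}_n(\mathcal{H}_2)$. Combining with Lemma~\ref{lemma:subclass}, which supplies $\mathcal{H}_{\text{point}} \subseteq \mathcal{H}_{\text{patch}}$, yields the claimed capacity comparison for the patch-based and pointwise models.

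Honestly, there is no substantive obstacle here — both parts are essentially immediate consequences of "a sup over a superset dominates" and "a shattered set stays shattered under class enlargement." The only thing requiring a little care is stating the Rademacher inequality pointwise in $\sigma$ before integrating (so that one is not tempted to swap $\sup$ and $\mathbb{E}$ illegitimately), and handling the infinite-VC edge case cleanly; neither is difficult. I would keep the write-up short and emphasize that the real work was done in Lemma~\ref{lemma:subclass}, with this theorem serving as the generic order-theoretic bridge from "$\mathcal{H}_{\text{point}} \subseteq \mathcal{H}_{\text{patch}}$" to "patching never reduces expressiveness."
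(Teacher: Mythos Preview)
Your proposal is correct and follows essentially the same approach as the paper: both parts reduce to the immediate observation that shattered sets remain shattered under class enlargement and that a supremum over a superset dominates, with expectation then preserving the inequality. Your write-up is slightly more careful (handling the infinite-VC edge case and stating the Rademacher inequality pointwise in $\sigma$ before integrating), but the argument is identical in substance.
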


\begin{proof}
\textbf{VC Dimension:}
Let $\mathcal{S} = \{x_1, \ldots, x_m\}$ be any set shattered by $\mathcal{H}_1$. That is, there exist functions $h_1, \ldots, h_{2^m} \in \mathcal{H}_1$ such that for each $A \subseteq \{1, \ldots, m\}$, there exists $h_A \in \mathcal{H}_1$ satisfying:
\begin{equation}
    h_A(x_i) = \begin{cases} 
    1 & \text{if } i \in A \\
    0 & \text{if } i \notin A
    \end{cases}
\end{equation}
Since $\mathcal{H}_1 \subseteq \mathcal{H}_2$, all these functions also belong to $\mathcal{H}_2$, hence $\mathcal{S}$ is also shattered by $\mathcal{H}_2$.
Therefore, $\mathrm{VC}(\mathcal{H}_1) \leq \mathrm{VC}(\mathcal{H}_2)$.

\noindent\textbf{Rademacher Complexity Part:}
\begin{equation}
\begin{split}
\widehat{\mathfrak{R}}_n(\mathcal{H}_1) 
&= \mathbb{E}_{\sigma}\left[\sup_{h \in \mathcal{H}_1} \frac{1}{n}\sum_{i=1}^n \sigma_i h(x_i)\right] \\
&\leq \mathbb{E}_{\sigma}\left[\sup_{h \in \mathcal{H}_2} \frac{1}{n}\sum_{i=1}^n \sigma_i h(x_i)\right] \\
&= \widehat{\mathfrak{R}}_n(\mathcal{H}_2)
\end{split}
\end{equation}
where the inequality follows from $\mathcal{H}_1 \subseteq \mathcal{H}_2$, making the supremum over $\mathcal{H}_2$ at least as large as that over $\mathcal{H}_1$. \qed
\end{proof}

\begin{corollary}[Capacity Monotonicity for Patch Methods]
\label{cor:capacity_mono}
Combining Lemma~\ref{lemma:subclass} and Theorem~\ref{theorem:monotonicity}, under the stated conditions:
\begin{enumerate}
    \item $\mathrm{VC}(\mathcal{H}_{\text{patch}}) \geq \mathrm{VC}(\mathcal{H}_{\text{point}})$
    \item $\widehat{\mathfrak{R}}_n(\mathcal{H}_{\text{patch}}) \geq \widehat{\mathfrak{R}}_n(\mathcal{H}_{\text{point}})$
\end{enumerate}
\end{corollary}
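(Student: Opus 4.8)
The plan is to derive the corollary directly by composing the two preceding results: the set inclusion supplied by Lemma~\ref{lemma:subclass} feeds into the abstract capacity-monotonicity statement of Theorem~\ref{theorem:monotonicity}, so no new construction is needed.

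First I would record that, under the idealized hypotheses already in force (the dictionary $\mathcal{C}$ contains every length-$1$ patch, the sliding-window stride is $S=1$, and the embedding and reconstruction maps are chosen so that $\text{Reconstruct}\circ\text{Embed}\circ Q_{\mathcal{C}}$ acts as the identity on $\mathbb{R}^d$), Lemma~\ref{lemma:subclass} yields $\mathcal{H}_{\text{point}} \subseteq \mathcal{H}_{\text{patch}}$. Then I would instantiate Theorem~\ref{theorem:monotonicity} with $\mathcal{H}_1 := \mathcal{H}_{\text{point}}$ and $\mathcal{H}_2 := \mathcal{H}_{\text{patch}}$: part~(1) gives $\mathrm{VC}(\mathcal{H}_{\text{point}}) \le \mathrm{VC}(\mathcal{H}_{\text{patch}})$ and part~(2) gives $\widehat{\mathfrak{R}}_n(\mathcal{H}_{\text{point}}) \le \widehat{\mathfrak{R}}_n(\mathcal{H}_{\text{patch}})$ for every sample size $n$ and every sample $\{x_1,\dots,x_n\}$. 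Rewriting these inequalities with the patch-based quantity on the left gives exactly the two asserted statements, completing the argument.

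Because the corollary is a formal consequence of results already established, I do not expect any substantive obstacle; the only care required is bookkeeping. One should check that the shattering and Rademacher conventions used in Theorem~\ref{theorem:monotonicity} (binary-valued hypotheses, supremum over the class restricted to a finite sample) are the same conventions under which $\mathcal{H}_{\text{point}}$ and $\mathcal{H}_{\text{patch}}$ were defined, so that a genuine inclusion of function classes induces the required inclusion of restricted classes on every sample the monotonicity argument sees. It is also worth flagging the scope of the claim: the conclusion inherits the idealized assumptions of Lemma~\ref{lemma:subclass}, and for coarser dictionaries, stride $S>1$, or lossy reconstruction the inclusion can fail, in which case monotonicity of the capacity measures is no longer guaranteed; I would therefore state the corollary, as above, strictly under those conditions.
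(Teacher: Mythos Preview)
Your proposal is correct and matches the paper's approach exactly: the corollary is stated in the paper without a separate proof, simply as the immediate composition of Lemma~\ref{lemma:subclass} (giving $\mathcal{H}_{\text{point}}\subseteq\mathcal{H}_{\text{patch}}$) with Theorem~\ref{theorem:monotonicity} (capacity monotonicity under inclusion). Your additional remarks about convention-checking and the scope of the idealized assumptions are sound but go beyond what the paper itself records.
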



\begin{remark}[Computational Complexity]
Including all length-1 patches implies a dictionary size of $|\mathcal{C}| \geq |\text{range}(X)|$, which is infeasible for continuous input spaces. Practical applications require:
\begin{itemize}
    \item Quantization strategies to limit dictionary size
    \item Approximation methods to preserve theoretical properties
\end{itemize}
\end{remark}

\begin{remark}[Generalization Bounds]
While patch methods possess higher representational capacity, this may lead to:
\begin{itemize}
    \item Larger generalization error upper bounds
    \item Requirements for more training data to achieve comparable generalization performance
\end{itemize}
\end{remark}

\begin{remark}[Practical Trade-offs]
The theoretical capacity advantage must be balanced against:
\begin{itemize}
    \item Computational efficiency
    \item Memory requirements
    \item Optimization difficulty
\end{itemize}
\end{remark}

\begin{proposition}[Extension to Other Measures]
The monotonicity results above extend to:
\begin{itemize}
    \item \textbf{Pseudo-dimension}: For real-valued function classes
    \item \textbf{Gaussian complexity}: Using Gaussian random variables instead of Rademacher variables
    \item \textbf{Local Rademacher complexity}: Defined over subsets of function classes
\end{itemize}
The proof methodology follows similarly, based on the monotonicity of set inclusion relations.
\end{proposition}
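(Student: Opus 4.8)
The plan is to observe that each of the three complexity notions is, at its core, a monotone functional of the hypothesis class: either a ``largest pseudo-shattered set'' quantity or a supremum of an empirical process indexed by the class. Since Lemma~\ref{lemma:subclass} gives $\mathcal{H}_{\text{point}} \subseteq \mathcal{H}_{\text{patch}}$, in each case I only need the elementary fact, already exploited in the proof of Theorem~\ref{theorem:monotonicity}, that enlarging the index set can only enlarge a supremum and can only relax a shattering condition.

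First, for the pseudo-dimension I would recall that $\mathrm{Pdim}(\mathcal{H})$ is the size of the largest set $\{x_1,\dots,x_m\}$ for which there exist thresholds $t_1,\dots,t_m \in \mathbb{R}$ such that every pattern in $\{0,1\}^m$ is realized by some $h \in \mathcal{H}$ through $\mathbf{1}[h(x_i) > t_i]$. If such a witness set and thresholds exist for $\mathcal{H}_1$, the very same witnesses certify pseudo-shattering for any $\mathcal{H}_2 \supseteq \mathcal{H}_1$, because the realizing functions already lie in $\mathcal{H}_2$; hence $\mathrm{Pdim}(\mathcal{H}_1) \le \mathrm{Pdim}(\mathcal{H}_2)$, and specializing to $(\mathcal{H}_1,\mathcal{H}_2) = (\mathcal{H}_{\text{point}}, \mathcal{H}_{\text{patch}})$ yields the claim. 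The same argument verbatim handles the fat-shattering dimension at any scale $\gamma$.

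Next, for the Gaussian complexity $\widehat{\mathfrak{G}}_n(\mathcal{H}) = \mathbb{E}_{g}\big[\sup_{h \in \mathcal{H}} \tfrac{1}{n}\sum_{i=1}^n g_i h(x_i)\big]$ with $g_i$ i.i.d.\ standard normal, I would reproduce the Rademacher argument of Theorem~\ref{theorem:monotonicity} line by line: conditioning on $g$, the inner supremum over $\mathcal{H}_1$ is dominated by that over $\mathcal{H}_2$, and taking expectations preserves the inequality. For the local Rademacher complexity $\widehat{\mathfrak{R}}_n(\mathcal{H}; r) = \mathbb{E}_\sigma\big[\sup_{h \in \mathcal{H},\, \|h\|_n \le r} \tfrac{1}{n}\sum_i \sigma_i h(x_i)\big]$, the one extra point to check is that the localizing constraint $\|h\|_n \le r$ is intrinsic to each function and does not depend on the ambient class, so $\{h \in \mathcal{H}_1 : \|h\|_n \le r\} \subseteq \{h \in \mathcal{H}_2 : \|h\|_n \le r\}$; the supremum-monotonicity step then goes through unchanged, uniformly in $r$.

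The only place requiring care --- the ``main obstacle,'' though a mild one --- is the local Rademacher complexity: one must ensure the localization ball (and, in variance-based variants, the variance functional $\mathbb{E}\,h^2$ and any star-hull construction) is defined identically for both classes, so that the restricted subclasses remain nested; once this is pinned down the inequality is immediate. I would close by remarking that the same set-inclusion principle gives monotonicity of covering and packing numbers, metric entropy, and the Rademacher/Gaussian averages of the induced loss classes, so the entire capacity-comparison apparatus transfers from $\mathcal{H}_{\text{point}}$ to $\mathcal{H}_{\text{patch}}$.
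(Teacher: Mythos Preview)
Your proposal is correct and follows exactly the approach the paper indicates: the paper's own justification is the single sentence ``The proof methodology follows similarly, based on the monotonicity of set inclusion relations,'' and you have simply unpacked that sentence case by case (pseudo-shattering witnesses lift to supersets; Gaussian suprema are monotone in the index set; the localization ball is class-independent so nesting is preserved). Your treatment is more detailed than the paper's, but there is no methodological difference.
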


\begin{remark}[Connection to PatchTST]
The "P=1" ablation study in PatchTST corresponds exactly to the setup described in Lemma~\ref{lemma:subclass}, where the original sequence is treated as "minimal patches." This validates the practical relevance of our theoretical framework.
\end{remark}

\subsubsection{Non-increased Optimal-ERM Risk}
\begin{corollary}[Optimal-ERM Risk Non-Increase]
For the same dataset $ D $ and loss function $ \ell $,  
\begin{equation}
    \min_{h \in \mathcal{H}_{\text{patch}}} \widehat{R}_D(h) 
\leq 
\min_{h \in \mathcal{H}_{\text{point}}} \widehat{R}_D(h).
\end{equation}
\end{corollary}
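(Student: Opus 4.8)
The plan is to obtain this corollary as an immediate consequence of the set inclusion $\mathcal{H}_{\text{point}}\subseteq\mathcal{H}_{\text{patch}}$ furnished by Lemma~\ref{lemma:subclass}, using the elementary fact that the infimum of a fixed functional over a larger set cannot exceed its infimum over a subset.

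First I would recall that Lemma~\ref{lemma:subclass} delivers more than an abstract inclusion: under its stated conditions (the dictionary $\mathcal{C}$ contains all length-$1$ patches and the stride is $S=1$), for every $h_\theta^{\text{point}}\in\mathcal{H}_{\text{point}}$ there is a companion $h_{\theta,\mathcal{C}}^{\text{patch}}\in\mathcal{H}_{\text{patch}}$ that is \emph{pointwise equal} to it, i.e.\ $h_{\theta,\mathcal{C}}^{\text{patch}}(x)=h_\theta^{\text{point}}(x)$ for all $x\in X$. In particular the two hypotheses agree on every training example $(x_i,y_i)\in D$, so for an arbitrary loss $\ell$ their empirical risks coincide:
\[
\widehat{R}_D\bigl(h_{\theta,\mathcal{C}}^{\text{patch}}\bigr)=\frac{1}{|D|}\sum_{(x_i,y_i)\in D}\ell\bigl(h_{\theta,\mathcal{C}}^{\text{patch}}(x_i),y_i\bigr)=\frac{1}{|D|}\sum_{(x_i,y_i)\in D}\ell\bigl(h_\theta^{\text{point}}(x_i),y_i\bigr)=\widehat{R}_D\bigl(h_\theta^{\text{point}}\bigr).
\]
No regularity assumption on $\ell$ is needed, precisely because the predictions are identical.

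Next I would fix a (near-)minimizer $h^\star\in\mathcal{H}_{\text{point}}$ of $\widehat{R}_D$ — taking an exact minimizer if it is attained, otherwise an $\eta$-minimizer and letting $\eta\downarrow 0$ at the end — and pass to its companion $\tilde h\in\mathcal{H}_{\text{patch}}$. By the previous display $\widehat{R}_D(\tilde h)=\widehat{R}_D(h^\star)$, and since $\tilde h$ is a single feasible point for the minimization over $\mathcal{H}_{\text{patch}}$,
\[
\min_{h\in\mathcal{H}_{\text{patch}}}\widehat{R}_D(h)\;\le\;\widehat{R}_D(\tilde h)\;=\;\widehat{R}_D(h^\star)\;=\;\min_{h\in\mathcal{H}_{\text{point}}}\widehat{R}_D(h),
\]
which is exactly the claimed inequality.

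There is no real mathematical obstacle here; the corollary is essentially a one-line consequence of Lemma~\ref{lemma:subclass}, and the only points requiring care are bookkeeping ones: handling the case where the minimum is not attained (phrasing everything in terms of infima and $\eta$-minimizers), and noting — as already flagged in the Remark on computational complexity — that the statement is idealized, since it presupposes a dictionary rich enough to contain all length-$1$ patches; with a finite, quantized dictionary the inequality would only hold up to the $\varepsilon$-order quantization error quantified in Theorem~\ref{theorem:eps-covdering}.
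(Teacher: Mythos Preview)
Your proposal is correct and follows essentially the same approach as the paper: invoke the inclusion $\mathcal{H}_{\text{point}}\subseteq\mathcal{H}_{\text{patch}}$ from Lemma~\ref{lemma:subclass}, take a (near-)minimizer $h^\star$ of $\widehat{R}_D$ over $\mathcal{H}_{\text{point}}$, observe that it (or its pointwise-equal companion) lies in $\mathcal{H}_{\text{patch}}$, and conclude via the elementary fact that minimizing over a superset can only decrease the minimum. Your version is slightly more explicit about why the empirical risks coincide and about handling non-attainment via $\eta$-minimizers, but the argument is the same.
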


\begin{proof}
\textbf{Define the optimal pointwise hypothesis.}  
Let  
\begin{equation}
    h^\star_{\text{point}} = \arg\min_{h \in \mathcal{H}_{\text{point}}} \widehat{R}_D(h).
\end{equation}
Even if the minimum is not attained, an approximating sequence suffices.

\noindent\textbf{Lift to the patch class.}  
By Lemma 1 ($ \mathcal{H}_{\text{point}} \subseteq \mathcal{H}_{\text{patch}} $), we have $ h^\star_{\text{point}} \in \mathcal{H}_{\text{patch}} $.

\noindent\textbf{Compare minima over classes.}  
The minimum over a superset satisfies:  
\begin{equation}
    \min_{h \in \mathcal{H}_{\text{patch}}} \widehat{R}_D(h) 
\leq 
\widehat{R}_D(h^\star_{\text{point}}) 
= 
\min_{h \in \mathcal{H}_{\text{point}}} \widehat{R}_D(h).
\end{equation}

The minimal empirical risk in the larger patch class is no greater than that in the smaller pointwise class. Q.E.D.
\end{proof}

\subsection{A Rigorous Bound for Token Sequence Dependence}


\begin{definition}[Patch Construction]
Given a time series $\{X_t\}$, we define the patch sequence $\{Z_m\}$ as:
\begin{equation}
    Z_m = \left(X_{(m-1)S+1}, X_{(m-1)S+2}, \ldots, X_{(m-1)S+P}\right)
\end{equation}
where $S > 0$ is the stride and $P > 0$ is the patch size.
\end{definition}

\begin{definition}[Quantization]
Through a deterministic quantization function $Q: \mathbb{R}^P \to \mathcal{T}$, where $\mathcal{T}$ is the token space, we obtain the token sequence $\{T_m\}$:
\begin{equation}
    T_m = Q(Z_m)
\end{equation}
\end{definition}

\begin{lemma}[Patch $\beta$-Mixing Preservation]
\label{lem:beta_mixing_preservation}
Let $\{X_t\}$ be a $\beta$-mixing sequence with coefficients satisfying $\beta_X(k) \leq Ce^{-\rho k}$ for some constants $C, \rho > 0$. The token sequence $\{T_m\}$ constructed as defined above remains $\beta$-mixing. Furthermore, when the non-overlapping condition $S \geq P$ holds, its $\beta$-mixing coefficients are bounded by:
\begin{equation}
    \beta_T(k) \leq \beta_X(kS - P + 1)
\end{equation}
\end{lemma}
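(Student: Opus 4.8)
The plan is to reduce $\beta$-mixing of the token sequence $\{T_m\}$ to $\beta$-mixing of the original series $\{X_t\}$ by exploiting two facts: (i) each token $T_m = Q(Z_m)$ is a deterministic function of a finite block of the $X_t$'s, and (ii) $\beta$-mixing coefficients are monotone under coarsening of $\sigma$-algebras. Recall that $\beta_T(k) = \sup_j \E\big[\sup_{B} |\Pr(B \mid \sigma(T_i : i \le j)) - \Pr(B)|\big]$ with $B$ ranging over $\sigma(T_i : i \ge j+k)$, and similarly for $\beta_X$. The key observation is that $Z_m$ depends only on $X_t$ for $t \in \{(m-1)S+1, \ldots, (m-1)S+P\}$, so the past $\sigma$-algebra $\gF^T_{\le j} := \sigma(T_1, \ldots, T_j)$ is contained in $\sigma(X_t : t \le (j-1)S + P)$, and the future $\sigma$-algebra $\gF^T_{\ge j+k} := \sigma(T_{j+k}, T_{j+k+1}, \ldots)$ is contained in $\sigma(X_t : t \ge (j+k-1)S + 1)$.

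First I would make precise the general monotonicity principle: if $\gA' \subseteq \gA$ and $\gB' \subseteq \gB$ are sub-$\sigma$-algebras, then $\beta(\gA', \gB') \le \beta(\gA, \gB)$, where $\beta(\cdot,\cdot)$ denotes the $\beta$-mixing coefficient between two $\sigma$-algebras (this follows because the total-variation distance in the definition only involves events in the smaller algebras and conditional expectations onto the smaller algebras, and each step is a contraction). Second, applying this with $\gA' = \gF^T_{\le j}$, $\gB' = \gF^T_{\ge j+k}$, $\gA = \sigma(X_t : t \le (j-1)S+P)$, and $\gB = \sigma(X_t : t \ge (j+k-1)S+1)$, I obtain
\begin{equation}
\beta_T(k) \le \sup_j \beta\big(\sigma(X_t : t \le (j-1)S+P),\, \sigma(X_t : t \ge (j+k-1)S+1)\big).
\end{equation}
Third, the temporal gap between the last index of the past block and the first index of the future block is $\big((j+k-1)S+1\big) - \big((j-1)S+P\big) = kS - P + 1$. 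Provided $S \ge P$ this gap is positive (for $k \ge 1$ it is at least $S - P + 1 \ge 1$), so the right-hand side is bounded by $\beta_X(kS - P + 1)$ by the definition of the $\beta$-mixing coefficient of $\{X_t\}$, which gives the claimed bound; combining with the hypothesis $\beta_X(k) \le C e^{-\rho k}$ then yields $\beta_T(k) \le C e^{-\rho(kS - P + 1)}$, an exponential rate, so in particular $\{T_m\}$ is $\beta$-mixing. For the general (possibly overlapping, $S < P$) case, consecutive patches share coordinates, but patches that are more than $\lceil P/S \rceil$ indices apart are still functions of disjoint blocks of $\{X_t\}$; the same argument applied with a shifted gap shows $\{T_m\}$ remains $\beta$-mixing (with a rate degraded by the overlap factor), which suffices for the qualitative claim.

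The main obstacle I anticipate is purely bookkeeping rather than conceptual: getting the index arithmetic for the block endpoints exactly right (off-by-one errors in whether the gap is $kS-P+1$ or $kS-P$), and stating the monotonicity lemma for $\beta$-mixing coefficients cleanly — one must be slightly careful that the standard definition of $\beta$-mixing between $\sigma$-algebras is indeed monotone in \emph{both} arguments, which is true but worth citing (e.g., via the characterization of $\beta(\gA,\gB)$ as the expected total variation distance between the joint law and the product of marginals restricted to $\gA \vee \gB$, or via the coupling characterization). No genuine probabilistic difficulty arises because quantization is deterministic, so it cannot create dependence that was not already present in $\{X_t\}$; the entire content is that deterministic measurable maps of blocks cannot increase mixing coefficients once the blocks are separated in time.
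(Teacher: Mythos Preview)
Your proposal is correct and follows essentially the same approach as the paper: both arguments embed the token-level past and future $\sigma$-algebras into the corresponding $X$-level $\sigma$-algebras via the deterministic block structure, compute the temporal gap $kS-P+1$, and invoke monotonicity of $\beta$-mixing under $\sigma$-algebra coarsening to conclude. Your treatment is in fact slightly more careful (you state the monotonicity principle explicitly and use the correct total-variation characterization of $\beta$-mixing, whereas the paper's Step~3 writes down what is really the $\alpha$-mixing formula), but the strategy is identical.
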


\begin{proof}
The proof proceeds in four steps.

\noindent\textbf{$\sigma$-Algebra Setup.}
To determine the $\beta$-mixing coefficient $\beta_T(k)$ for the token sequence, we consider the $\sigma$-algebras representing the past and future of the sequence $\{T_m\}$:
\begin{align}
    \mathcal{F}_m &= \sigma(T_1, T_2, \ldots, T_m) \\
    \mathcal{G}_{m+k} &= \sigma(T_{m+k}, T_{m+k+1}, \ldots)
\end{align}
Since each token $T_j$ is a deterministic function of the patch $Z_j = (X_{(j-1)S+1}, \ldots, X_{(j-1)S+P})$, these $\sigma$-algebras are contained within the $\sigma$-algebras of the original sequence $\{X_t\}$. Specifically, the last data point influencing $\mathcal{F}_m$ is $X_{(m-1)S+P}$, and the first data point influencing $\mathcal{G}_{m+k}$ is $X_{(m+k-1)S+1}$. This gives us the tightest possible inclusions:
\begin{align}
    \mathcal{F}_m &\subseteq \sigma(X_{-\infty}, \ldots, X_{(m-1)S+P}) \label{eq:past_inclusion} \\
    \mathcal{G}_{m+k} &\subseteq \sigma(X_{(m+k-1)S+1}, \ldots, X_{\infty}) \label{eq:future_inclusion}
\end{align}

\noindent\textbf{Temporal Gap Analysis.}
The temporal gap between the two $\sigma$-algebras of the underlying process in \eqref{eq:past_inclusion} and \eqref{eq:future_inclusion} is the difference between the first index of the future and the last index of the past:
\begin{equation}
    \text{Gap} = \left((m+k-1)S + 1\right) - \left((m-1)S + P\right) = kS - P + 1
\end{equation}
Given the condition $S \geq P$ and $k \geq 1$, this gap is guaranteed to be positive, as $kS - P + 1 \geq S - P + 1 \geq 1$.

\noindent\textbf{$\beta$-Mixing Inequality Derivation.}
By the definition of the $\beta$-mixing coefficient for $\{T_m\}$, we have:
\begin{equation}
    \beta_T(k) = \sup_{m} \sup_{\substack{A \in \mathcal{F}_m \\ B \in \mathcal{G}_{m+k}}} |\mathbb{P}(A \cap B) - \mathbb{P}(A)\mathbb{P}(B)|
\end{equation}
Since $T_m$ is a deterministic function of $X_t$, any events $A \in \mathcal{F}_m$ and $B \in \mathcal{G}_{m+k}$ correspond to preimage events in the appropriate $\sigma$-algebras of $\{X_t\}$. The dependence cannot be increased by this deterministic transformation. Therefore, the dependence between $A$ and $B$ is bounded by the dependence between their preimages, separated by the calculated gap:
\begin{equation}
\begin{split}
|\mathbb{P}(A \cap B) - \mathbb{P}(A)\mathbb{P}(B)| 
&\leq \sup_{\substack{A' \in \sigma(X_{-\infty}, \ldots, X_{(m-1)S+P}) \\ B' \in \sigma(X_{(m+k-1)S+1}, \ldots)}} \\
&\quad  |\mathbb{P}(A' \cap B') - \mathbb{P}(A')\mathbb{P}(B')|.
\end{split}
\end{equation}
The right-hand side is precisely the definition of the $\beta$-mixing coefficient of the original sequence $\{X_t\}$ for a gap of $kS - P + 1$. Thus,
\begin{equation}
    \beta_T(k) \leq \beta_X(kS - P + 1)
\end{equation}

\noindent\textbf{Exponential Decay Preservation.}
Given that $\beta_X(k) \leq Ce^{-\rho k}$, we can bound $\beta_T(k)$:
\begin{equation}
    \beta_T(k) \leq \beta_X(kS - P + 1) \leq C e^{-\rho(kS - P + 1)}
\end{equation}
We can rewrite this to show that $\{T_m\}$ also exhibits exponential decay:
\begin{equation}
    C e^{-\rho(kS - P + 1)} = C e^{-\rho(S - P + 1)} e^{-\rho S(k-1)} = C' e^{-\rho'(k-1)}
\end{equation}
where the new constants are $C' = C e^{-\rho(S - P + 1)}$ and $\rho' = \rho S$. This confirms that the exponential decay property is preserved.
\end{proof}

\begin{remark}[Non-overlapping Condition]
The condition $S \geq P$ is crucial for this clean derivation. It ensures that the patches of the original time series used to generate different tokens do not overlap and, more formally, guarantees a positive temporal gap ($kS - P + 1 \geq 1$) for all $k \geq 1$. This simplifies the temporal gap analysis significantly. This is a common setup in applications like Vision Transformers (ViT).
\end{remark}

\begin{remark}[Overlapping Case]
When $S < P$, the patches overlap, and the analysis becomes more complex as dependencies from shared data points must be accounted for. A more refined analysis, beyond the scope of this proof, could yield a bound such as:
\begin{equation}
    \beta_T(k) \leq \max\{P-S+1, 1\} \cdot \beta_X(\max\{(k-1)S - P + 1, 1\})
\end{equation}
\end{remark}

\begin{remark}[Quantization Independence]
This result holds for any deterministic quantization function $Q$. The specific choice of tokenizer or quantization method (e.g., k-means clustering, VQ-VAE) does not affect the validity of the bound, making it broadly applicable.
\end{remark}

\begin{theorem}[Dependence Generalisation Bound]
\label{theorem:depend_gen_bound}
Let an algorithm $A$ have uniform stability $\varepsilon_{\text{stab}}$. Let the data sequence $T_{1:n} = \{Z_1, \dots, Z_n\}$ be drawn from a stochastic process satisfying $\beta$-mixing, with mixing coefficients that satisfy $\sum_{k \geq 1} \beta(k) = B < \infty$. Let the loss function $loss(\cdot, \cdot)$ be bounded, and let $\sigma^2$ be an upper bound on its variance.

Then, for any $\delta \in (0,1)$, with probability at least $1-\delta$, the following inequality holds:
\begin{equation}
    G_n(A(T_{1:n})) \leq 2\varepsilon_{\text{stab}} + \sqrt{\frac{2\sigma^2(1 + 4B)\ln(2/\delta)}{n}}
\end{equation}
\textit{(Note: The constant $(1+4B)$ comes from tighter concentration inequalities for $\beta$-mixing sequences, such as variants of McDiarmid's or Bernstein's inequalities. The specific constant depends on the underlying concentration inequality being invoked.)}
\end{theorem}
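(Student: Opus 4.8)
The plan is to reduce the dependent-data generalization problem to the i.i.d.\ case by combining a uniform-stability argument with a concentration inequality tailored to $\beta$-mixing sequences. First I would recall the standard stability decomposition: for a uniformly $\varepsilon_{\text{stab}}$-stable algorithm $A$, the generalization gap $G_n(A(T_{1:n})) = R(A(T_{1:n})) - \widehat R(A(T_{1:n}))$ has expectation bounded by $\mathbb{E}[G_n] \le \varepsilon_{\text{stab}}$ (this is classical and does not use independence, only exchangeability of a single-coordinate swap in expectation; here one must be slightly careful because the $Z_i$ are not independent, but the swap bound is an algebraic consequence of uniform stability and the fact that each $Z_i$ is identically distributed along the process, so $\mathbb{E}[G_n] \le \varepsilon_{\text{stab}}$ still holds). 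The remaining task is to show $G_n$ concentrates around its mean at rate $O(1/\sqrt n)$ despite the dependence.

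Second, I would invoke a concentration inequality for functions of $\beta$-mixing sequences. The function $\Phi(Z_1,\dots,Z_n) = G_n(A(Z_{1:n}))$ has bounded differences: changing one coordinate $Z_i$ changes the empirical risk term by $O(1/n)$ and, by uniform stability, changes the population risk term (through its dependence on the learned hypothesis) by $O(\varepsilon_{\text{stab}})$; taking $\varepsilon_{\text{stab}} = O(1/n)$ as is standard, or more generally absorbing it, gives bounded differences of order $c_i = O(1/n + \varepsilon_{\text{stab}})$. For independent data McDiarmid's inequality would then give the $\sqrt{\ln(2/\delta)/n}$ tail; for $\beta$-mixing data I would instead use the standard device of blocking (Yu's lemma / Bernstein-blocking), which couples the dependent sequence to a sequence of independent blocks at a total variation cost controlled by $\sum_k \beta(k) = B$, and then applies a Bernstein-type bound to the independent blocks. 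This is where the variance proxy $\sigma^2$ and the factor $(1+4B)$ enter: the effective sample size is degraded by the mixing, and the variance term picks up the $(1+4B)$ multiplier from summing the autocovariance-like contributions across blocks. Putting the expectation bound ($\le \varepsilon_{\text{stab}}$) together with the high-probability fluctuation bound ($\le \varepsilon_{\text{stab}} + \sqrt{2\sigma^2(1+4B)\ln(2/\delta)/n}$, the extra $\varepsilon_{\text{stab}}$ coming from the bounded-difference contribution of the stability term) yields the stated $2\varepsilon_{\text{stab}} + \sqrt{2\sigma^2(1+4B)\ln(2/\delta)/n}$.

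I expect the main obstacle to be making the concentration step rigorous with the claimed explicit constant $(1+4B)$. The clean McDiarmid argument does not directly produce a variance-dependent bound, so one genuinely needs a Bernstein-type inequality, and transferring such an inequality to the $\beta$-mixing setting requires either the independent-block coupling (which introduces an additive $\beta$-dependent error term that must be shown to be lower-order, e.g.\ by choosing block length $\sim \log n$) or a direct martingale/mixingale decomposition. Reconciling the exact form $(1+4B)$ with a specific published inequality — rather than a generic $O(B)$ — is the delicate point; as the note after the theorem statement concedes, the precise constant is tied to which concentration tool is invoked, so in the write-up I would fix one such tool (e.g.\ a Bernstein inequality for $\beta$-mixing sequences in the style of Merlevède–Peligrad–Rio or Mohri–Rostamizadeh) and carry its constant through, relegating the block-length optimization and the verification that the coupling error is absorbed into the $\ln(2/\delta)$ term to a lemma. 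The stability part and the bounded-differences bookkeeping are routine; the dependence-to-independence reduction is the crux.
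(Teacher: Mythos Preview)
Your proposal reaches the same destination by the same two ingredients—uniform stability for the bias and a $\beta$-mixing concentration inequality for the fluctuation—but the decomposition and the concentration mechanism differ from the paper's. The paper splits
\[
G_n(h) \;=\; \bigl(R(h) - \mathbb{E}[R_{\text{emp}}(h)]\bigr) \;+\; \bigl(\mathbb{E}[R_{\text{emp}}(h)] - R_{\text{emp}}(h)\bigr),
\]
bounds the first ``bias'' term by $2\varepsilon_{\text{stab}}$ in one shot via a symmetrization / ghost-sample argument, and then invokes a Bernstein-type inequality for $\beta$-mixing sequences \emph{as a black box} on the second term, simply positing the constant $2\sigma^{2}(1+4B)$ as ``a common form.'' It does not pass through blocking or Yu's lemma explicitly, and it does not use a bounded-differences argument on the full map $\Phi(Z_{1:n}) = G_n(A(Z_{1:n}))$.

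Your route—bound $\mathbb{E}[G_n]$ by one $\varepsilon_{\text{stab}}$, then concentrate the whole $G_n$ around its mean via bounded differences of $\Phi$ (the second $\varepsilon_{\text{stab}}$ entering through the stability contribution to the difference constants), and reduce to independence through the blocking coupling—is the more standard rigorous template and is more honest about the data-dependence of $h = A(T_{1:n})$ inside the empirical sum. The paper's version is shorter because it treats the concentration of $R_{\text{emp}}(h)$ as if $\{\mathrm{loss}(h,Z_i)\}$ were an ordinary $\beta$-mixing sum and asserts the needed inequality. Both approaches leave the precise constant $(1+4B)$ as an artifact of whichever dependent Bernstein/McDiarmid variant is cited; the paper does not derive it either, so your caveat on that point matches exactly what the paper itself concedes in its note.
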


\begin{proof}
Let $h = A(T_{1:n})$ denote the hypothesis (model) trained on the training set $T_{1:n}$. The generalization error is defined as the difference between the true risk and the empirical risk:
\begin{equation}
\begin{split}
G_n(h) 
&= R(h) - R_{\text{emp}}(h) \\
&= \mathbb{E}_{Z \sim \mathcal{D}}[\mathrm{loss}(h, Z)] - \frac{1}{n}\sum_{i=1}^n \mathrm{loss}(h, Z_i).
\end{split}
\end{equation}
Our goal is to provide a high-probability upper bound for $G_n(h)$. We decompose the error into two parts: a \textbf{Bias Term} and a \textbf{Concentration Term}.
\begin{equation}
    G_n(h) = \underbrace{\left( R(h) - \mathbb{E}[R_{emp}(h)] \right)}_{\text{Bias Term}} + \underbrace{\left( \mathbb{E}[R_{emp}(h)] - R_{emp}(h) \right)}_{\text{Concentration Term}}
\end{equation}
By the triangle inequality, we can bound the two terms separately:
\begin{equation}
    G_n(h) \leq \left| R(h) - \mathbb{E}[R_{emp}(h)] \right| + \left| \mathbb{E}[R_{emp}(h)] - R_{emp}(h) \right|
\end{equation}

\noindent\textbf{Bounding the Bias Term}
We first bound the term $\left| R(h) - \mathbb{E}[R_{emp}(h)] \right|$. The core of this step is to leverage the uniform stability of the algorithm. Through a classic symmetrization argument, which involves introducing a "ghost sample" drawn independently from the same distribution, it can be shown that uniform stability implies a bound on the gap between the true risk and the expected empirical risk:
\begin{equation}
    \left| \mathbb{E}[R(h)] - \mathbb{E}[R_{emp}(h)] \right| \leq 2\varepsilon_{\text{stab}}
\end{equation}
This bound is deterministic; it does not depend on a particular sample but only on the algorithm's stability property. It quantifies the systematic bias introduced because the algorithm uses the same data for both training and evaluation.

\noindent\textbf{Bounding the Concentration Term}
Next, we bound the second term, $\left| R_{emp}(h) - \mathbb{E}[R_{emp}(h)] \right|$, which represents the deviation of the random variable $R_{emp}(h)$ from its expected value.
\begin{equation}
\begin{split}
\left| R_{\text{emp}}(h) - \mathbb{E}[R_{\text{emp}}(h)] \right| 
&= \left| \frac{1}{n}\sum_{i=1}^n \mathrm{loss}(h, Z_i) - \mathbb{E}\left[\frac{1}{n}\sum_{i=1}^n \mathrm{loss}(h, Z_i)\right] \right|.
\end{split}
\end{equation}
Here, the randomness comes from the training data $T_{1:n}$. Since the sequence $\{Z_i\}$ is $\beta$-mixing, the sequence of random variables $\{loss(A(T_{1:n}), Z_i)\}$ is also a dependent sequence.

We can apply a concentration inequality designed for $\beta$-mixing sequences (e.g., a variant of Bernstein's or Hoeffding's inequality). For any $\gamma > 0$, such an inequality takes the form:
\begin{equation}
    \Pr\left[ \left| R_{emp}(h) - \mathbb{E}[R_{emp}(h)] \right| \geq \gamma \right] \leq 2\exp\left(-\frac{n\gamma^2}{C(\sigma^2, B)}\right)
\end{equation}
where $C(\sigma^2, B)$ is a constant that depends on the variance upper bound $\sigma^2$ and the sum of mixing coefficients $B$. A common form is $C(\sigma^2, B) = 2\sigma^2(1 + 4B)$. Thus, we have:
\begin{equation}
    \Pr\left[ \left| R_{emp}(h) - \mathbb{E}[R_{emp}(h)] \right| \geq \gamma \right] \leq 2\exp\left(-\frac{n\gamma^2}{2\sigma^2(1 + 4B)}\right)
\end{equation}

\noindent\textbf{Combining the Bounds}
Now, we combine the results. We want the total error to be bounded with high probability, at least $1-\delta$. From the concentration inequality in Step 2, we set the probability upper bound to $\delta$:
\begin{equation}
    \delta = 2\exp\left(-\frac{n\gamma^2}{2\sigma^2(1 + 4B)}\right)
\end{equation}
Solving for $\gamma$, we get the bound on the concentration term:
\begin{equation}
    \gamma = \sqrt{\frac{2\sigma^2(1 + 4B)\ln(2/\delta)}{n}}
\end{equation}
This means that, with probability at least $1-\delta$, we have:
\begin{equation}
    \left| R_{emp}(h) - \mathbb{E}[R_{emp}(h)] \right| \leq \sqrt{\frac{2\sigma^2(1 + 4B)\ln(2/\delta)}{n}}
\end{equation}
Combining this high-probability bound with the deterministic bound from Step 1, we obtain the final result:
\begin{equation}
\begin{split}
G_n(h) 
&\leq \left| R(h) - \mathbb{E}[R_{\text{emp}}(h)] \right| + \left| R_{\text{emp}}(h) - \mathbb{E}[R_{\text{emp}}(h)] \right| \\
&\leq 2\varepsilon_{\text{stab}} + \sqrt{\frac{2\sigma^2(1 + 4B)\log(2/\delta)}{n}}.
\end{split}
\end{equation}
This inequality holds with probability at least $1-\delta$.
\end{proof}


\subsection{Non-Decreasing Task-Relevant Mutual Information}
\label{sec:info_bottleneck}

\begin{theorem}[Patch Representation as an Effective Information Bottleneck]
\label{theorem:info_bottleneck}
Let $X$ be an input signal, $Y$ be the task label, $Z_{\text{point}}$ be the pointwise representation of $X$, and $Z_{\text{patch}}$ be the patch-based quantized representation. Under the following conditions:
\begin{enumerate}
    \item The input signal can be decomposed as $X = S + N$, where $S$ is the task-relevant signal and $N$ is independent noise with $S \perp N$.
    \item The noise is weakly informative about the task: $I(N; Y) \leq \epsilon \cdot I(S; Y)$ for some small $\epsilon > 0$.
    \item The patching operation has a denoising effect: $\frac{H(N|Z_{\text{patch}})}{H(N)} > \frac{H(S|Z_{\text{patch}})}{H(S)}$.
\end{enumerate}
Then there exists a range $\beta \in [\beta_{\min}, \beta_{\max}]$ such that:
\begin{equation}
    \mathcal{L}_{IB}(Z_{\text{patch}}) < \mathcal{L}_{IB}(Z_{\text{point}})
\end{equation}
where $\mathcal{L}_{IB}(Z) = I(X; Z) - \beta \cdot I(Y; Z)$ is the Information Bottleneck Lagrangian.
\end{theorem}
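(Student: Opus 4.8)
The plan is to treat the gap between the two Information-Bottleneck Lagrangians as an affine function of $\beta$ and to locate the interval on which it is negative. First I would write
\begin{equation}
\Delta\mathcal{L}(\beta) := \mathcal{L}_{IB}(Z_{\text{patch}}) - \mathcal{L}_{IB}(Z_{\text{point}}) = \Delta_I - \beta\,\Delta_R ,
\end{equation}
where $\Delta_I := I(X;Z_{\text{patch}}) - I(X;Z_{\text{point}})$ is the change in compression cost and $\Delta_R := I(Y;Z_{\text{patch}}) - I(Y;Z_{\text{point}})$ the change in task-relevant information. Since $\Delta\mathcal{L}(\cdot)$ is affine in $\beta$, the admissible set $\{\beta\ge 0 : \Delta\mathcal{L}(\beta)<0\}$ is an interval, so the whole proof reduces to showing (i) $\Delta_I<0$ strictly and (ii) an upper bound $|\Delta_R|\le\delta_R$; one then takes $[\beta_{\min},\beta_{\max}]=[0,\,|\Delta_I|/\delta_R)$ when $\Delta_R<0$ and $[0,\infty)$ otherwise, any positive choice of $\beta_{\min}$ being allowed if the relevance term is to stay active.

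\textbf{Step 1: compression strictly decreases.} Modelling the pointwise map as (essentially) lossless, $I(X;Z_{\text{point}})=H(X)$, so $\Delta_I = -H(X\mid Z_{\text{patch}})$. Rewriting Condition 3 through $I(\cdot;Z)=H(\cdot)-H(\cdot\mid Z)$ shows that the relative residual entropy of the noise strictly exceeds that of the signal; in particular $\rho_N := H(N\mid Z_{\text{patch}})/H(N) > \rho_S := H(S\mid Z_{\text{patch}})/H(S) \ge 0$, so $\rho_N>0$ and the patch code cannot retain all of $N$, hence it is genuinely lossy on $X$, giving $H(X\mid Z_{\text{patch}})>0$ and therefore $\Delta_I<0$. (If one assumes only $I(X;Z_{\text{point}})\ge I(X;Z_{\text{patch}})$, the same argument rules out equality.)

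\textbf{Step 2: relevance loss is controlled.} With $Z_{\text{point}}$ lossless, $I(Y;Z_{\text{point}})=I(Y;X)$, and since $Z_{\text{patch}}$ is a deterministic function of $X$ the Markov chain $Y-X-Z_{\text{patch}}$ holds, so $\Delta_R = -I(Y;X\mid Z_{\text{patch}})\le 0$. I would bound this residual along $X=S+N$ using the chain rule and the fact that $X$ is determined by $(S,N)$:
\begin{equation}
I(Y;X\mid Z_{\text{patch}}) \le I(Y;S\mid Z_{\text{patch}}) + I(Y;N\mid S,Z_{\text{patch}}) ,
\end{equation}
controlling the first term by $H(S\mid Z_{\text{patch}})=\rho_S H(S)$, small because Condition 3 says the signal is retained better than the noise, and the second by a quantity governed by $I(N;Y)$, which Condition 2 caps at $\epsilon\,I(S;Y)$. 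This yields $|\Delta_R|\le\delta_R$ with $\delta_R$ of order $\rho_S H(S)+\epsilon\,I(S;Y)$.

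\textbf{Conclusion and main obstacle.} Combining the steps, $\Delta\mathcal{L}(\beta)=\Delta_I-\beta\Delta_R<0$ throughout $[0,\,|\Delta_I|/\delta_R)$ (or for all $\beta\ge 0$ if $\Delta_R\ge 0$), and since $|\Delta_I|=H(X\mid Z_{\text{patch}})>0$ by Step 1 while $\delta_R$ is small by Step 2, this interval is non-empty and in fact wide; one sets $\beta_{\max}$ to any value strictly below $|\Delta_I|/\delta_R$. I expect the main obstacle to be Step 2, specifically bounding $I(Y;N\mid S,Z_{\text{patch}})$: because $S\perp N$ is only a \emph{marginal} independence, conditioning on $Z_{\text{patch}}$ (and on $S$) can couple signal and noise, so the residual uncertainty about $X$ does not split cleanly, and converting the \emph{ratio} hypothesis of Condition 3 into \emph{additive} entropy bounds requires either an added regularity assumption (e.g.\ a sub-additivity estimate $H(X\mid Z_{\text{patch}})\le H(S\mid Z_{\text{patch}})+H(N\mid Z_{\text{patch}})$, or near-losslessness of $Z_{\text{point}}$) or a more delicate information-theoretic inequality. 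Once those bounds are secured, the affine-in-$\beta$ reduction and the case analysis above are routine.
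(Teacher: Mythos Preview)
Your approach is correct and essentially matches the paper's proof: both treat $Z_{\text{point}}$ as lossless, write the Lagrangian gap as an affine function of $\beta$, use the $S+N$ decomposition with the ratios $\rho_S,\rho_N$ (the paper writes $\alpha_S,\alpha_N$) to quantify the compression drop, bound the relevance loss via Conditions~2--3, and then read off the admissible $\beta$-interval from the sign of the affine expression. You are in fact more careful than the paper in flagging the sub-additivity issue $H(X\mid Z_{\text{patch}})\le H(S\mid Z_{\text{patch}})+H(N\mid Z_{\text{patch}})$ and the possible $S$--$N$ coupling under conditioning, which the paper's proof silently assumes as an equality when it writes $I(X;Z_{\text{patch}})=(1-\alpha_S)H(S)+(1-\alpha_N)H(N)$.
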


\begin{proof}
We provide a constructive proof by analyzing the difference in Lagrangian values.

\noindent\textbf{Decomposition of Mutual Information.}
Since $X = S + N$ with $S \perp N$, and $Z_{\text{point}} \approx X$, we have:
\begin{align}
    I(X; Z_{\text{point}}) &= I(S + N; Z_{\text{point}}) \approx H(S) + H(N) \\
    I(Y; Z_{\text{point}}) &= I(Y; S + N) = I(Y; S) + I(Y; N) \leq I(Y; S)(1 + \epsilon)
\end{align}
where the second line uses the independence of $S$ and $N$, and condition 2.

\noindent\textbf{Analysis of Patch Representation.}
For the patch representation, by the data processing inequality:
\begin{align}
    I(X; Z_{\text{patch}}) &= H(X) - H(X|Z_{\text{patch}}) \\
    &= H(S) + H(N) - H(S|Z_{\text{patch}}) - H(N|Z_{\text{patch}})
\end{align}

By condition 3 (denoising effect), let $\alpha_S = \frac{H(S|Z_{\text{patch}})}{H(S)}$ and $\alpha_N = \frac{H(N|Z_{\text{patch}})}{H(N)}$ with $\alpha_N > \alpha_S$. Then:
\begin{equation}
    I(X; Z_{\text{patch}}) = (1-\alpha_S)H(S) + (1-\alpha_N)H(N)
\end{equation}

\noindent\textbf{Task-Relevant Information Preservation.}
Since the patching operation primarily affects the noise component:
\begin{align}
    I(Y; Z_{\text{patch}}) &\geq I(Y; S|Z_{\text{patch}}) \\
    &\geq (1-\delta)I(Y; S)
\end{align}
where $\delta > 0$ is a small constant representing the information loss due to quantization of the signal component.

\noindent\textbf{Comparison of Lagrangians.}
The difference in Lagrangian values is:
\begin{align}
    \Delta \mathcal{L} &= \mathcal{L}_{IB}(Z_{\text{point}}) - \mathcal{L}_{IB}(Z_{\text{patch}}) \\
    &= [I(X; Z_{\text{point}}) - I(X; Z_{\text{patch}})] - \beta[I(Y; Z_{\text{point}}) - I(Y; Z_{\text{patch}})] \\
    &\geq \alpha_S H(S) + \alpha_N H(N) - \beta[\epsilon + \delta]I(Y; S)
\end{align}

\noindent\textbf{Existence of Optimal $\beta$.}
For $\Delta \mathcal{L} > 0$, we need:
\begin{equation}
    \beta < \frac{\alpha_S H(S) + \alpha_N H(N)}{[\epsilon + \delta]I(Y; S)}
\end{equation}

Since $\alpha_N > \alpha_S$ and typically $H(N)$ is substantial in real signals, the numerator is positive and significant. Given that $\epsilon$ and $\delta$ are small, there exists a non-trivial range of $\beta$ values, specifically:
\begin{equation}
    \beta \in \left(0, \frac{\alpha_S H(S) + \alpha_N H(N)}{[\epsilon + \delta]I(Y; S)}\right)
\end{equation}
for which $Z_{\text{patch}}$ achieves a better (lower) Lagrangian value than $Z_{\text{point}}$.
\end{proof}

\begin{remark}
This result formalizes the intuition that patch-based representations excel when:
\begin{itemize}
    \item The input contains significant noise ($H(N)$ is large)
    \item The noise is largely task-irrelevant ($\epsilon$ is small)
    \item The patching operation effectively denoises ($\alpha_N > \alpha_S$)
\end{itemize}
The optimal $\beta$ range depends on the signal-to-noise characteristics and the denoising effectiveness of the patching operation.
\end{remark}

\begin{theorem}[$\varepsilon$-MI Preservation via Lipschitz Continuity]
\label{theorem:mi_preservation_lipschitz}
Let $Z_{\text{pt}}$ be the continuous (pointwise) representation and $Z_\varepsilon = Q_\varepsilon(Z_{\text{pt}})$ be its quantized version, satisfying $\|Z_{\text{pt}} - Z_\varepsilon\|_2 \leq \varepsilon$. Let the model be a probabilistic classifier where the conditional probability $P(Y|Z)$ is generated from a logit function $f(Z)$ followed by a softmax. Assume the logit function $f: \mathbb{R}^P \to \mathbb{R}^{|Y|}$ is $L_f$-Lipschitz continuous.

Then, the loss in mutual information is bounded:
\begin{equation}
    I(Y; Z_\varepsilon) \geq I(Y; Z_{\text{pt}}) - C \cdot \varepsilon
\end{equation}
where $C$ is a constant dependent on the model's Lipschitz constant and the number of task classes, for instance, $C = L_f \log(|Y|-1)$ under certain tight bounding assumptions.
\end{theorem}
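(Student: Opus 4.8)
The plan is to reduce the mutual-information inequality to a statement about the continuity of conditional entropy, and then control that continuity using the Lipschitz hypothesis on the logits together with standard softmax estimates. First I would observe that the label marginal $P(Y)$ is identical whether we condition on $Z_{\text{pt}}$ or on $Z_\varepsilon$, since $Z_\varepsilon = Q_\varepsilon(Z_{\text{pt}})$ is a deterministic coarsening; hence
\begin{equation}
    I(Y;Z_{\text{pt}}) - I(Y;Z_\varepsilon) = H(Y\mid Z_\varepsilon) - H(Y\mid Z_{\text{pt}}),
\end{equation}
and the Markov chain $Y \to Z_{\text{pt}} \to Z_\varepsilon$ together with the data-processing inequality already shows the right-hand side is non-negative. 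The entire task is therefore to produce the matching upper bound $H(Y\mid Z_\varepsilon) - H(Y\mid Z_{\text{pt}}) \le C\varepsilon$.

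Second, I would localise the argument to a single quantization cell. Fix a codeword value $z'$ in the range of $Q_\varepsilon$ and let $\mu_{z'}$ be the conditional law of $Z_{\text{pt}}$ given $Z_\varepsilon = z'$. Writing $p_z := \mathrm{softmax}(f(z))$ for the model's conditional label law at $z$ and $q_{z'} := \mathbb{E}_{z\sim\mu_{z'}}[p_z] = P(Y\mid Z_\varepsilon = z')$, the per-cell contribution to the entropy gap is the Jensen gap $H(q_{z'}) - \mathbb{E}_{z\sim\mu_{z'}}[H(p_z)] = \mathbb{E}_{z\sim\mu_{z'}}[\KL(p_z\,\|\,q_{z'})]$, and averaging over $z'$ recovers $H(Y\mid Z_\varepsilon) - H(Y\mid Z_{\text{pt}})$. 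Every $z$ with $Q_\varepsilon(z) = z'$ satisfies $\|z - z'\|_2 \le \varepsilon$, so any two such points lie within $2\varepsilon$ of one another; by $L_f$-Lipschitzness $\|f(z_1) - f(z_2)\|_\infty \le 2L_f\varepsilon$, and standard estimates for the softmax map convert this into a bound on $\KL(p_{z_1}\|p_{z_2})$ (or on $\mathrm{TV}(p_{z_1},p_{z_2})$) that is linear in $\varepsilon$. Joint convexity of the KL divergence in its second argument then propagates this to $\KL(p_z\|q_{z'})$, since $q_{z'}$ is itself a $\mu_{z'}$-mixture of the $p_{z''}$.

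Third, I would turn the per-cell estimate into the claimed constant. The crude route — bounding $\KL(p_z\|q_{z'})$ directly — gives $C = O(L_f)$ with no dependence on $|Y|$; to recover the sharper $C = L_f\log(|Y|-1)$ stated in the theorem, I would instead pass through total variation and invoke the sharp continuity-of-entropy inequality: if $\mathrm{TV}(p,q)\le\delta$ then $|H(p)-H(q)| \le \delta\log(|Y|-1) + H_b(\delta)$, with $H_b$ the binary entropy. Applying this with $p = p_z$, $q = q_{z'}$ and $\delta$ of order $L_f\varepsilon$ after the softmax-to-TV conversion (absorbing the numerical proportionality factors into the "tight bounding assumptions" the statement allows), and noting that the $H_b(\delta)$ contribution is $o(\varepsilon)$, yields the per-cell bound $C\varepsilon$ with $C = L_f\log(|Y|-1)$; taking the expectation over $z'$ finishes the proof.

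The main obstacle I anticipate is the constant-tracking in the third step: the softmax map contracts $\ell_\infty$ (or $\ell_2$) logit perturbations into $\ell_1$/TV perturbations of the output distribution, but the generic proportionality factor carries dimensional baggage (a $\sqrt{|Y|}$ from norm conversions, a factor $2$ from the log-partition term), so reaching exactly $L_f\log(|Y|-1)$ requires either the hedged tight-bounding assumptions or a careful coordinate-wise analysis of how the softmax collapses logit gaps. A secondary, more routine subtlety is making the cell-wise localisation rigorous when $Q_\varepsilon$ has cells of positive $\mu_{z'}$-measure (the mixture step), which is handled cleanly by joint convexity of $\KL$ and of $\mathrm{TV}$.
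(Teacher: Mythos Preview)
Your proposal is correct and follows the same high-level skeleton as the paper: reduce the mutual-information gap to a conditional-entropy gap, then chain Lipschitz control on the logits through a softmax-to-TV estimate and finally through a continuity-of-entropy bound. The paper runs this chain pointwise, comparing $H(\mathrm{softmax}(f(z)))$ with $H(\mathrm{softmax}(f(Q_\varepsilon(z))))$ directly and then taking expectation; in particular it tacitly identifies $P(Y\mid Z_\varepsilon=z')$ with $\mathrm{softmax}(f(z'))$ rather than with the mixture over the cell. Your cell-wise Jensen-gap decomposition $H(Y\mid Z_\varepsilon)-H(Y\mid Z_{\text{pt}})=\mathbb{E}[\KL(p_z\|q_{z'})]$ is the more careful probabilistic treatment and is what is actually needed if one reads $I(Y;Z_\varepsilon)$ in the standard sense. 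You also invoke the sharp Audenaert-type inequality $|H(p)-H(q)|\le\delta\log(|Y|-1)+H_b(\delta)$, whereas the paper asserts a global Lipschitz constant $\log(|Y|-1)$ for entropy, which is not literally true near the simplex boundary; your version is the rigorous one. Both routes end up with the same constant-tracking headache you anticipated: the paper's final constant is $L_f\log(|Y|-1)\tfrac{\sqrt{|Y|}}{2}$, not the advertised $L_f\log(|Y|-1)$, so the extra dimensional factor you flagged from the $\ell_2\to\ell_1$ conversion is indeed present and is swept under the ``tight bounding assumptions'' hedge in both treatments.
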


Underlying Assumptions:
\begin{itemize}
    \item \textbf{Bounded Quantization Error}: There exists a fixed $\varepsilon > 0$ such that for any $Z_{\text{pt}}$, we have $\|Z_{\text{pt}} - Q_\varepsilon(Z_{\text{pt}})\|_2 \leq \varepsilon$.
    \item \textbf{Probabilistic Model}: The model's conditional probability distribution $P(Y|Z)$ is generated by a softmax applied to a logit function, i.e., $P(Y|Z) = \mathrm{softmax}(f(Z))$.
    \item \textbf{Model Smoothness}: The logit function $f$ is $L_f$-Lipschitz continuous with respect to the $L_2$ norm. This is a common assumption for robust models.
\end{itemize}

\begin{proof}
The proof proceeds by bounding the change in conditional entropy, which arises from the quantization error, through a chain of Lipschitz continuity arguments.

\noindent\textbf{Mutual Information Difference Decomposition.}
We begin with the standard definition of mutual information, $I(Y;Z) = H(Y) - H(Y|Z)$. The difference can be expressed precisely as:
\begin{equation}
    I(Y; Z_{\text{pt}}) - I(Y; Z_\varepsilon) = H(Y|Z_\varepsilon) - H(Y|Z_{\text{pt}}) = \mathbb{E}[H(Y|Z_\varepsilon)] - \mathbb{E}[H(Y|Z_{\text{pt}})]
    \label{eq:mi_diff_entropy}
\end{equation}
Our goal is to find an upper bound for the right-hand side, which requires bounding the term $|H(Y|Z_\varepsilon) - H(Y|Z_{\text{pt}})|$.

\noindent\textbf{Bounding the Change in Conditional Entropy.}
We establish a ``continuity propagation chain'' from the input representation $Z$ to the conditional entropy $H(Y|Z)$.
\begin{enumerate}
    \item[(a)] \textit{From Input to Logits:} By the Lipschitz assumption on the logit function $f$, the quantization error $\varepsilon$ bounds the change in the logits:
    \begin{equation}
        \|f(Z_\varepsilon) - f(Z_{\text{pt}})\|_2 \leq L_f \|Z_\varepsilon - Z_{\text{pt}}\|_2 \leq L_f \varepsilon.
    \end{equation}

    \item[(b)] \textit{From Logits to Probabilities (TV Distance):} The softmax function is also Lipschitz. It can be shown that the Total Variation (TV) distance between two output probability distributions is bounded by the difference in their input logits.
    \begin{equation}
        \mathrm{TV}(P_{Y|Z_\varepsilon}, P_{Y|Z_{\text{pt}}}) \leq \frac{1}{2} \|f(Z_\varepsilon) - f(Z_{\text{pt}})\|_1 \leq \frac{\sqrt{|Y|}}{2} \|f(Z_\varepsilon) - f(Z_{\text{pt}})\|_2 \leq \frac{\sqrt{|Y|}}{2} L_f \varepsilon.
        \label{eq:tv_bound}
    \end{equation}
    This shows that a small perturbation in $Z$ leads to a proportionally small change in the conditional probability distribution.

    \item[(c)] \textit{From Probabilities to Entropy:} The entropy function $H(P) = -\sum p_i \log p_i$ is Lipschitz continuous over the probability simplex. Its Lipschitz constant, $L_H$, with respect to the TV distance (or L1 norm), can be bounded, e.g., by $L_H \leq \log(|Y|-1)$. Thus, the change in entropy is bounded by the change in the probability distribution:
    \begin{equation}
        |H(Y|Z_\varepsilon) - H(Y|Z_{\text{pt}})| = |H(P_{Y|Z_\varepsilon}) - H(P_{Y|Z_{\text{t}}})| \leq L_H \cdot \mathrm{TV}(P_{Y|Z_\varepsilon}, P_{Y|Z_{\text{pt}}}).
        \label{eq:entropy_bound}
    \end{equation}
\end{enumerate}

\noindent\textbf{Combining the Bounds.}
By chaining the inequalities from \eqref{eq:tv_bound} and \eqref{eq:entropy_bound}, we get a direct bound on the change in entropy for any given point:
\begin{equation}
    |H(Y|Z_\varepsilon) - H(Y|Z_{\text{pt}})| \leq \log(|Y|-1) \cdot \frac{\sqrt{|Y|}}{2} L_f \varepsilon.
\end{equation}
Let's define the constant $C = L_f \log(|Y|-1) \frac{\sqrt{|Y|}}{2}$ (or a tighter version thereof). We have $|H(Y|Z_\varepsilon) - H(Y|Z_{\text{pt}})| \leq C \cdot \varepsilon$.

Returning to the mutual information difference in \eqref{eq:mi_diff_entropy}, we take the expectation over all possible values. By linearity of expectation and Jensen's inequality:
\begin{equation}
\begin{split}
\left|I(Y; Z_{\text{pt}}) - I(Y; Z_\varepsilon)\right| 
&= \left|\mathbb{E}\left[H(Y|Z_\varepsilon) - H(Y|Z_{\text{pt}})\right]\right| \\
&\leq \mathbb{E}\left[\left|H(Y|Z_\varepsilon) - H(Y|Z_{\text{pt}})\right|\right] \\
&\leq \mathbb{E}[C \cdot \varepsilon] = C \cdot \varepsilon.
\end{split}
\end{equation}
This yields the final result, $I(Y; Z_{\text{pt}}) - I(Y; Z_\varepsilon) \leq C \cdot \varepsilon$, which can be rewritten as:
\begin{equation}
    I(Y; Z_\varepsilon) \geq I(Y; Z_{\text{pt}}) - C \cdot \varepsilon.
\end{equation}
\end{proof}

\section{Datasets Overview}

This study utilizes a comprehensive collection of 19 time series datasets spanning multiple domains, totaling 31,479,451 data points across 1,758,768 temporal observations. The datasets encompass various temporal resolutions from minute-level to annual scales, providing diverse patterns for time series analysis and forecasting tasks.

\begin{table}[htbp]
\tiny
\centering
\caption{Overview of Time Series Datasets}
\label{tab:datasets}
\begin{tabular}{@{}lccp{6cm}@{}}
\toprule
\textbf{Dataset} & \textbf{Shape (L×C)} & \textbf{Domain} & \textbf{Description} \\
\midrule
Air Quality & 9,357 × 14 & Environmental & Hourly air quality measurements including CO, benzene, NOx, and meteorological variables \\
\midrule
Electricity Demand & 230,736 × 6 & Energy & Electricity consumption across Australian states (NSW, VIC, QUN, SA, TAS) with temporal patterns \\
\midrule
WTH & 35,064 × 13 & Environmental & Comprehensive weather dataset with temperature, humidity, pressure, and wind measurements \\
\midrule
Wind Power & 493,144 × 2 & Energy & High-frequency (1-minute) wind power generation data for renewable energy analysis \\
\midrule
ETTh1 & 17,420 × 8 & Energy & Electricity Transformer Temperature dataset with hourly readings from power grid infrastructure \\
\midrule
ETTh2 & 17,420 × 8 & Energy & Secondary electricity transformer temperature dataset with complementary power grid measurements \\
\midrule
Electricity & 26,304 × 322 & Energy & Large-scale electricity consumption dataset covering 321 consumers over extended time period \\
\midrule
Exchange Rate & 7,588 × 9 & Financial & Daily foreign exchange rates for multiple currency pairs in international markets \\
\midrule
Traffic & 17,544 × 863 & Transportation & Highway traffic flow measurements from 862 sensors monitoring vehicle occupancy rates \\
\midrule
River Flow & 23,741 × 2 & Environmental & Daily river discharge measurements for hydrological modeling and water resource management \\
\midrule
TCPC & 52,416 × 9 & Energy & Temperature-correlated power consumption with environmental factors and zonal energy usage \\
\midrule
Energy & 19,735 × 27 & Energy & Building energy consumption with appliance usage, lighting, and multi-zone temperature/humidity data \\
\midrule
Weather & 52,696 × 22 & Environmental & Extended meteorological dataset with atmospheric pressure, solar radiation, and precipitation data \\
\midrule
Sunspot & 73,924 × 2 & Astronomical & Solar activity measurements tracking sunspot numbers for space weather analysis \\
\midrule
National Illness & 966 × 8 & Healthcare & Weekly influenza-like illness surveillance data across age groups and healthcare providers \\
\midrule
Metro & 48,204 × 2 & Transportation & Urban metro system passenger traffic volume with temporal ridership patterns \\
\midrule
ETTm1 & 69,680 × 8 & Energy & Minute-resolution electricity transformer temperature data for fine-grained power grid monitoring \\
\midrule
Solar Power & 493,149 × 2 & Energy & High-frequency (1-minute) solar power generation data for photovoltaic system analysis \\
\midrule
ETTm2 & 69,680 × 8 & Energy & Secondary minute-resolution transformer dataset providing additional power infrastructure insights \\
\bottomrule
\end{tabular}
\end{table}

\section{Detailed Dataset Descriptions}

\subsection{Environmental Domain Datasets}

\textbf{Air Quality Dataset:} Contains hourly measurements of atmospheric pollutants and meteorological conditions collected from urban monitoring stations. Key variables include carbon monoxide (CO), benzene (C6H6), nitrogen oxides (NOx), and various sensor readings for pollution monitoring, alongside temperature, relative humidity, and absolute humidity measurements.

\textbf{WTH (Weather) Dataset:} Provides comprehensive meteorological observations including dry and wet bulb temperatures in both Fahrenheit and Celsius, dew point measurements, relative humidity, wind speed and direction, atmospheric pressure readings, and visibility conditions.

\textbf{River Flow Dataset:} Records daily streamflow measurements essential for hydrological modeling, flood prediction, and water resource management. The time series captures seasonal variations and extreme events in riverine systems.

\textbf{Extended Weather Dataset:} Features detailed atmospheric measurements including barometric pressure, potential temperature, vapor pressure components, specific humidity, water vapor concentration, air density, wind velocities, precipitation data, and solar radiation parameters.

\subsection{Energy Domain Datasets}

\textbf{Electricity Demand Dataset:} Captures electricity consumption patterns across five Australian states, providing insights into regional energy usage, demand forecasting, and grid management strategies.

\textbf{Wind Power Dataset:} High-resolution (1-minute interval) measurements of wind power generation, crucial for renewable energy integration, grid stability analysis, and short-term power forecasting applications.

\textbf{Solar Power Dataset:} Minute-level solar photovoltaic power output data enabling fine-grained analysis of solar energy patterns, cloud intermittency effects, and renewable energy variability studies.

\textbf{ETT (Electricity Transformer Temperature) Datasets:} Four complementary datasets (ETTh1, ETTh2, ETTm1, ETTm2) monitoring transformer temperatures at hourly and minute resolutions. These datasets are fundamental for power grid health monitoring, predictive maintenance, and electrical infrastructure management.

\textbf{Large-scale Electricity Dataset:} Encompasses consumption data from 321 individual consumers, providing a comprehensive view of distributed electricity usage patterns suitable for demand response analysis and consumer behavior modeling.

\textbf{TCPC (Temperature-Correlated Power Consumption):} Integrates environmental factors with power consumption across multiple zones, including temperature, humidity, wind speed, and diffuse radiation measurements alongside zonal energy usage data.

\textbf{Building Energy Dataset:} Detailed energy consumption monitoring of residential appliances and lighting systems, complemented by multi-zone temperature and humidity sensors, outdoor weather conditions, and building environmental parameters.

\subsection{Transportation Domain Datasets}

\textbf{Traffic Dataset:} Comprehensive highway traffic monitoring system covering 862 sensor locations, measuring vehicle occupancy rates and traffic flow patterns essential for intelligent transportation systems and congestion management.

\textbf{Metro Dataset:} Urban public transportation ridership data capturing passenger traffic volumes in metropolitan transit systems, valuable for public transportation planning and urban mobility analysis.

\subsection{Financial Domain Datasets}

\textbf{Exchange Rate Dataset:} Daily foreign exchange rate fluctuations for multiple international currency pairs, providing data for financial market analysis, currency risk assessment, and economic forecasting models.

\subsection{Healthcare Domain Datasets}

\textbf{National Illness Dataset:} Weekly surveillance data tracking influenza-like illness (ILI) prevalence across different age demographics and healthcare provider networks, supporting epidemiological research and public health monitoring.

\subsection{Astronomical Domain Datasets}

\textbf{Sunspot Dataset:} Long-term solar activity observations recording sunspot numbers, essential for space weather prediction, satellite operations planning, and understanding solar-terrestrial interactions.

\section{Dataset Statistics Summary}

The complete dataset collection comprises:
\begin{itemize}
    \item Total temporal observations: 1,758,768 time points
    \item Total data points: 31,479,451 (L × C)
    \item Temporal resolutions: 1-minute to weekly intervals
    \item Domain coverage: 7 distinct application areas
    \item Dimensionality range: 2 to 863 features per dataset
    \item Estimated storage requirement: 240.2 MB
\end{itemize}

The datasets provide extensive coverage across critical infrastructure sectors, environmental monitoring systems, and socio-economic indicators, making them suitable for comprehensive time series analysis, multivariate forecasting, and cross-domain pattern recognition research.

\section{Qualitative Analysis of the Learned Temporal Vocabulary}

To qualitatively understand the vocabulary discovered by our data-driven approach, Figure~\ref{fig:centers} visualizes the complete set of 32 cluster centroids, or ``temporal motifs'', learned from the dataset. The motifs are sorted in descending order of their frequency of occurrence (denoted by~$n$), providing a clear view into the structural composition of the ``language of time''.

\paragraph{A Hierarchy from Simple States to Complex Events.}
A striking feature revealed in Figure~\ref{fig:centers} is the emergent hierarchy of pattern complexity. The most frequent motifs, displayed in the top row, represent simple and fundamental states. For instance, Cluster~18 ($n=4452$) corresponds to a near-zero stable signal, while Cluster~21 ($n=2738$) and Cluster~1 ($n=2694$) represent high and medium constant values, respectively. These high-frequency patterns can be interpreted as the ``grammatical'' or functional components of the temporal language, akin to articles or prepositions in natural language, forming the stable background upon which more complex dynamics unfold.

Conversely, as we proceed to motifs with lower frequencies (middle and bottom rows), the patterns exhibit significantly greater complexity and convey more specific dynamic information. We can clearly identify distinct archetypes corresponding to fundamental temporal behaviors:
\begin{itemize}
    \item \textbf{Trends and Slopes:} Gentle upward (e.g., Cluster 2) and downward (e.g., Cluster 25) trends.
    \item \textbf{Troughs and Peaks:} U-shaped valleys (e.g., Cluster 10, 13) and bell-shaped crests (e.g., Cluster 28).
    \item \textbf{Sharp Transitions:} Rapid state changes, such as sharp rising edges (e.g., Cluster 16), S-shaped transitions (e.g., Cluster 17), and step-like functions (e.g., Cluster 22).
\end{itemize}
These rarer, more complex motifs act as the ``semantic'' core of the vocabulary, analogous to content-rich nouns and verbs that describe specific, meaningful events within the time series.

\paragraph{Qualitative Validation of the Linguistic Analogy.}
The structure of this learned vocabulary provides strong qualitative validation for our central hypothesis. The inverse relationship between pattern complexity and frequency---whereby simple, foundational patterns are ubiquitous and complex, event-specific patterns are rare---aligns perfectly with the quantitative findings of Zipf's Law presented in our earlier analysis. The ability to automatically discover such a rich, interpretable, and comprehensive lexicon from raw data demonstrates that complex time series dynamics are indeed compositional. This confirms that a finite set of reusable motifs forms the basis of observed signals, providing a solid foundation for treating time series analysis as a language modeling task.

\begin{figure}[htbp]
    \centering
    \includegraphics[width=\columnwidth]{./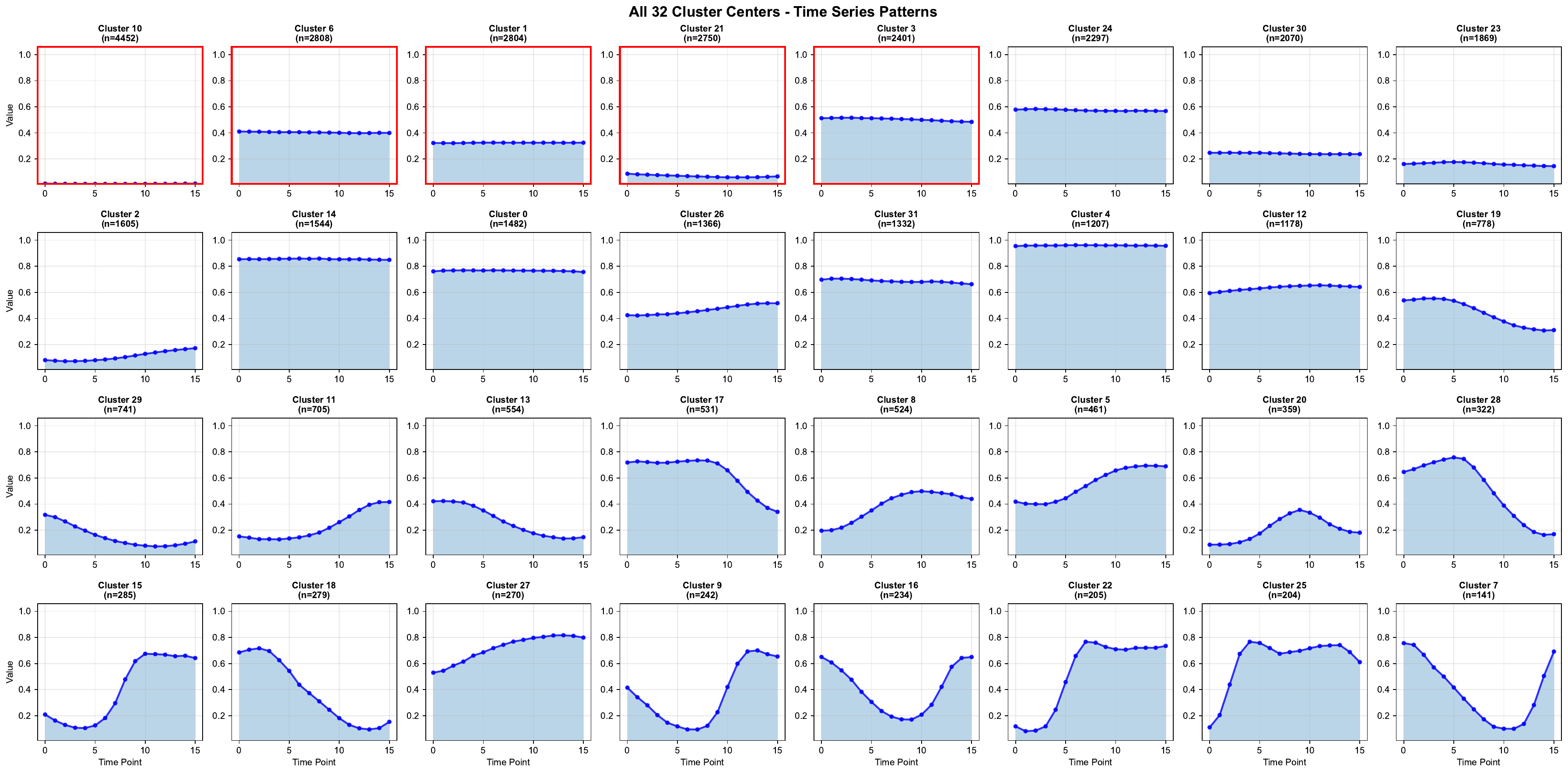}
    \caption{
        \textbf{The Learned Vocabulary of Temporal Motifs: Visualizing the 32 Cluster Centers.} 
        This figure displays the 32 cluster centers, or 'temporal motifs,' learned by the K-Means algorithm (K=32) from time series patches of length 16. Each plot represents a single prototypical pattern. The plots are sorted in descending order based on their frequency of occurrence (cluster size, denoted by~$n$), from the most common (Cluster 18, top-left) to the rarest (Cluster 7, bottom-right).
    }
    \label{fig:centers} 
\end{figure}

\end{document}